\newtheorem{example}{Example}
\newtheorem{proposition}{Proposition}
\newcommand{\qed}{\hfill$\rule{2mm}{3mm}$}
\newenvironment{proof}{\par{\noindent \bf Proof }}{\qed \par}
\newcommand{\Exp}{\mathbb{E}}
\newcommand{\reals}{\mathbb{R}}
\newcommand{\vb}{\mathbf{b}}
\newcommand{\ve}{\mathbf{e}}
\newcommand{\vx}{\mathbf{x}}
\newcommand{\hy}{\hat{y}}
\newcommand{\vtheta}{\boldsymbol{\theta}}
\newcommand{\bb}{\bar{b}}
\newcommand{\cA}{\mathcal{A}}
\newcommand{\cG}{\mathcal{G}}
\newcommand{\cH}{\mathcal{H}}
\newcommand{\cU}{\mathcal{U}}
\newcommand{\cY}{\mathcal{Y}}
\newcommand{\cX}{\mathcal{X}}
\newcommand{\cW}{\mathcal{W}}
\newcommand{\hl}[1]{\textcolor{black}{#1}}
\title{Fairness Behind a Veil of Ignorance:\\ A Welfare Analysis for Automated Decision Making}
\author{
   \makebox[.34\linewidth]{Hoda Heidari }\\
   ETH Z{\"u}rich\\
   \url{hheidari@inf.ethz.ch} \\
   \and
   \makebox[.34\linewidth]{Claudio Ferrari} \\
   ETH Z{\"u}rich\\
   \url{ferraric@ethz.ch} \\
   \and
   \makebox[.34\linewidth]{Krishna P. Gummadi} \\
   MPI-SWS\\
   \url{gummadi@mpi-sws.org} \\
   \and
   \makebox[.34\linewidth]{Andreas Krause}\\
   ETH Z{\"u}rich\\
   \url{krausea@ethz.ch} \\
}
\begin{document}
\date{}
\maketitle

\begin{abstract}
We draw attention to an important, yet largely overlooked aspect of evaluating fairness for automated decision making systems---namely risk and welfare considerations. Our proposed family of measures corresponds to the long-established formulations of cardinal social welfare in economics, \hl{and is justified by the Rawlsian conception of fairness \emph{behind a veil of ignorance}.} The convex formulation of our \hl{welfare-based measures of fairness} allows us to integrate them as a constraint into any convex loss minimization pipeline. Our empirical analysis reveals interesting trade-offs between our proposal and (a) prediction accuracy, (b) group discrimination, and (c) \citeauthor{dwork2012fairness}'s notion of individual fairness. Furthermore and perhaps most importantly, our work provides both \hl{heuristic justification} and empirical evidence suggesting that a lower-bound on our measures often leads to bounded inequality in algorithmic outcomes; hence presenting the first computationally feasible mechanism for bounding individual-level \hl{inequality}.
\end{abstract}

\section{Introduction}
Traditionally, data-driven decision making systems have been designed with the sole purpose of maximizing some system-wide measure of performance, such as accuracy or revenue.
Today, these systems are increasingly employed to make consequential decisions for human subjects---examples include employment~\citep{hiring}, credit lending~\citep{whitecase}, policing~\citep{policing}, and criminal justice~\citep{sentencing}. Decisions made in this fashion  have long-lasting impact on people's lives and---absent a careful ethical analysis---may affect certain individuals or social groups negatively~\citep{sweeney2013discrimination,propublica,guardian_beauty}. This realization has recently spawned an active area of research into quantifying and guaranteeing fairness for machine learning~\citep{dwork2012fairness,kleinberg2016inherent,hardt2016equality}. 

Virtually all existing formulations of algorithmic fairness focus on guaranteeing \emph{equality} of some notion of \emph{benefit} across different individuals or socially salient groups.  For instance, demographic parity~\citep{kamiran2009classifying,kamishima2011fairness,feldman2015certifying} seeks to equalize the percentage of people receiving a particular outcome across different groups. Equality of opportunity~\citep{hardt2016equality} requires the equality of false positive/false negative rates. Individual fairness~\citep{dwork2012fairness} demands that people who are equal with respect to the task at hand receive equal outcomes. In essence, the debate so far has mostly revolved around identifying the right notion of \emph{benefit} and a tractable mathematical formulation for \emph{equalizing} it. 

The view of fairness as some form of equality is indeed an important perspective in the moral evaluation of algorithmic decision making systems---decision subjects often compare their outcomes with other similarly situated individuals, and these \emph{interpersonal comparisons} play a key role in shaping their judgment of the system. We argue, however, that equality is not the only factor at play: we draw attention to two important, yet largely overlooked aspects of evaluating fairness of automated decision making systems---namely \emph{risk} and \emph{welfare}\footnote{\hl{We define \emph{welfare} precisely in Sec.~\ref{sec:model}, but for now it can be taken as the sum of benefits across all subjects.}} considerations. The importance of these factors is perhaps best illustrated via a simple example.
\begin{figure}
  \begin{center}
    \includegraphics[width=0.6\textwidth]{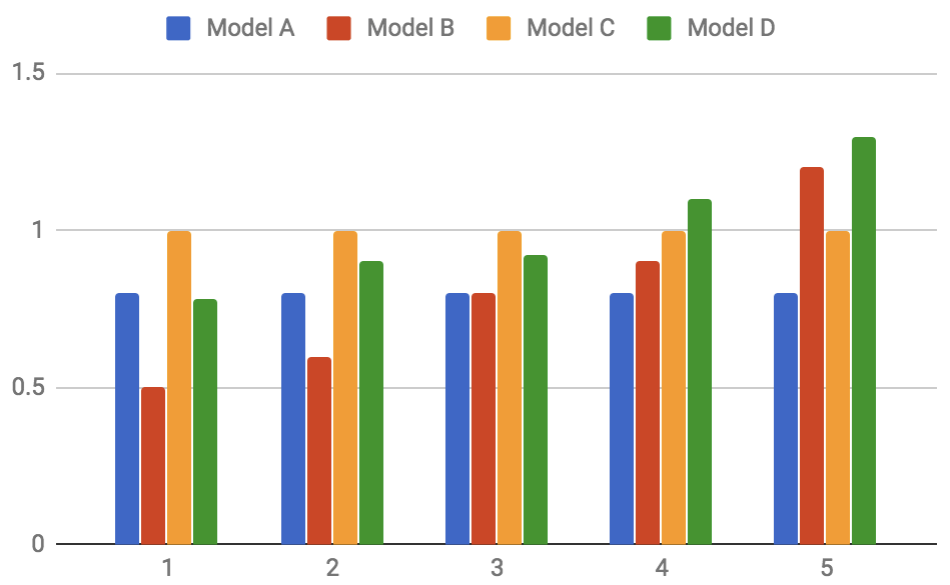}
  \end{center}
  \caption{Predictive model A assigns the same benefit of 0.8 to everyone; model C assigns the same benefit of 1 to everyone; model B results in benefits $(0.5,0.6,0.8, 0.9, 1.2)$, and model D, $(0.78,0.9,0.92,1.1,1.3)$. Our proposed measures prefer A to B, C to D, and D to A.}
    \label{fig:example}
\end{figure}
\begin{example}
Suppose we have four decision making models A, B, C, D each resulting in a different benefit distribution across 5 groups/individuals $i_1, i_2, i_3, i_4, i_5$ (we will precisely define in Section~\ref{sec:model} how benefits are computed, but for the time being and as a concrete example, suppose benefits are equivalent to salary predictions made through different regression models). Figure~\ref{fig:example} illustrates the setting. 
%
%
%
Suppose a decision maker is tasked with determining which one of these alternatives is \emph{ethically more desirable}. From an inequality minimizing perspective, A is clearly more desirable than B: note that both A, B result in the same total benefit of 4, and A distributes it equally across $i_1$, ..., $i_5$. With a similar reasoning, C is preferred to D. Notice, however, that by focusing on equality alone, A would be deemed more desirable than D, but there is an issue with this conclusion: almost everyone---expect for $i_1$ who sees a negligible drop of less than $2\%$ in their benefit---is significantly better off under D compared to A.\footnote{\hl{In political philosophy, this problem is sometimes referred to as the ``leveling down objection to equality''.}} In other words, even though D results in unequal benefits and it does \emph{not} Pareto-dominate A, collectively it results in higher \emph{welfare} and lower \emph{risk}, and therefore, both intuitively and from a \emph{rational} point of view, it should be considered more desirable. With a similar reasoning, the decision maker should conclude C is more desirable than A, even though both provide benefits equally to all individuals. 
\end{example}
In light of this example and inspired by the long line of research on distributive justice in economics, in this paper we propose a natural family of measures for evaluating algorithmic fairness corresponding to the well-studied notions of \emph{cardinal social welfare} in economics~\citep{harsanyi1953cardinal,harsanyi1955cardinal}. Our proposed measures indeed prefer A to B, C to D, and D to A.

\hl{
The interpretation of social welfare as a measure of fairness is justified by the concept of \emph{veil of ignorance} (see~\citep{sep-original-position} for the philosophical background). \citet{rawls2009theory} proposes ``veil of ignorance'' as the ideal condition/mental state under which a policy maker can select the fairest among a number of political alternatives. He suggests that the policy maker performs the following thought experiment: imagine him/herself as an individual who knows nothing about the particular position they will be born in within the society, and is tasked with selecting the most just among a set of alternatives. According to the utilitarian doctrine in this hypothetical original/ex-ante position if the individual is \emph{rational}, they would aim to minimize risk and insure against unlucky events in which they turn out to assume the position of a low-benefit individual. Note that decision making behind a veil of ignorance is a purely imaginary condition: the decision maker can never in actuality be in this position, nonetheless, the thought experiment is useful in detaching him/her from the needs and wishes of a particular person/group, and consequently making a fair judgment. 
}
%
\hl{
Our main conceptual contribution is to measure fairness in the context of algorithmic decision making by evaluating it from behind a veil of ignorance: our proposal is for the ML expert wishing to train a fair decision making model (e.g. to decide whether salary predictions are to be made using a neural network or a decision tree) to perform the aforementioned thought experiment: He/she should evaluate fairness of each alternative by taking the perspective of the algorithmic decision making subjects---but not any particular one of them: he/she must \emph{imagine} themselves in a hypothetical setting where they know they will be born as one of the subjects, but don't know in advance which one. We consider the alternative he/she deems best behind this veil of ignorance to be the fairest. 
}

\hl{
To formalize the above, our core idea consists of comparing the expected \emph{utility} a randomly chosen, \emph{risk-averse} subject of algorithmic decision making receives under different predictive models. In the example above, if one is to choose between models A, D without knowing which one of the 5 individuals they will be, then the risk associated with alternative D is much less than that of A---under A the individual is going to receive a (relatively low) benefit of 0.8 with certainty, whereas under D with high probability (i.e. 4/5) they obtain a (relatively large) benefit of 0.9 or more, and with low probability (1/5) they receive a benefit of 0.78, roughly the same as the level of benefit they would attain under A. Such considerations of risk is precisely what our proposal seeks to quantify. 
}
We remark that in comparing two benefit distributions of the \emph{same mean} (e.g. A, B or C, D in our earlier example), our risk-averse measures always prefer the more equal one (A is preferred to B and C is preferred to D). See Proposition~\ref{prop:heuristic} for the formal statement. Thus, our measures are inherently equality preferring. However, the key advantage of our measures of social welfare over those focusing on inequality manifests when, as we saw in the above example, comparing two benefit distributions of different means. In such conditions, inequality based measures are insufficient and may result in misleading conclusions, while risk-averse measures of social welfare are better suited to identify the fairest alternative. When comparing two benefit distributions of the same mean, social welfare and inequality would always yield identical conclusions. 

Furthermore and from a computational perspective, our welfare-based measures of fairness are more convenient to work with due to their \emph{convex} formulation. This allows us to integrate them as a constraint into any convex loss minimization pipeline, and solve the resulting problem efficiently and exactly. Our empirical analysis reveals interesting trade-offs between our proposal and (a) prediction accuracy, (b) group discrimination, and (c) \citeauthor{dwork2012fairness}'s notion of individual fairness. In particular, we show how loss in accuracy increases with the degree of risk aversion, $\alpha$, and as the lower bound on social welfare, $\tau$, becomes more demanding. We observe that the difference between false positive/negative rates across different social groups consistently decreases with $\tau$. The impact of our constraints on demographic parity and \citeauthor{dwork2012fairness}'s notion of individual fairness is slightly more nuanced and depends on the type of learning task at hand (regression vs. classification). Last but not least, we provide empirical evidence suggesting that a lower bound on social welfare often leads to bounded inequality in algorithmic outcomes; hence presenting the first computationally feasible mechanism for bounding individual-level \hl{inequality}.

\subsection{Related Work}
Much of the existing work on algorithmic fairness has been devoted to the study of \emph{discrimination} (also called \emph{statistical}- or \emph{group}-level fairness). Statistical notions require that given a classifier, a certain fairness metric is equal across all protected groups (see e.g.~\citep{kleinberg2016inherent,zafar2017fairness,zafar2017dmt}). Statistical notions of fairness fail to guarantee fairness at the individual level. 
\citet{dwork2012fairness} first formalized the notion of individual fairness for classification learning tasks, requiring that two individuals who are similar with respect to the task at hand receive similar classification outcomes. The formulation relies on the existence of a suitable similarity metric between individuals, and as pointed out by \citeauthor{speicher2018a}, it does not take into account the variation in \emph{social desirability} of various outcomes and people's merit for different decisions. 
\citet{speicher2018a} recently proposed a new measure for quantifying individual unfairness utilizing income \emph{inequality indices} from economics and applying them to algorithmic benefit distributions. Both existing formulations of individual-level fairness focus solely on the \emph{inter-personal} comparisons of algorithmic outcomes/benefits across individuals and do not account for \emph{risk} and \emph{welfare} considerations. Furthermore, we are not aware of computationally efficient mechanisms for bounding either of these notions.

\hl{We consider our family of measures to belong to the individual category: our welfare-based measures do \emph{not} require knowledge of individuals' membership in protected groups, and compose the \emph{individual level} utilities through \emph{summation}. Note that \citet{dwork2012fairness} propose a stronger notion of individual fairness---one that requires a certain (minimum) condition to hold for every individual. As we will see shortly, a limiting case of our proposal (the limit of $\alpha = -\infty$) provides a similar guarantee in terms of benefits. While our main focus in this work is on individual-level fairness, our proposal can be readily extended to measure and constraint group-level unfairness.}

\citet{zafar2017parity} recently proposed two preference-based notions of fairness at the group-level, called \emph{preferred treatment} and \emph{preferred impact}. A group-conditional classifier satisfies preferred treatment if no group collectively prefers another group's classifier to their own (in terms of average misclassification rate). This definition is based on the notion of \emph{envy-freeness}~\citep{varian1974equity} in economics and applies to group-conditional classifiers only. A classifier satisfies preferred impact if it Pareto-dominates an existing impact parity classifier (i.e. every group is better off using the former classifier compared to the latter). Pareto-dominance (to be defined precisely in Section~\ref{sec:model}) leads to a \emph{partial} ordering among alternatives and usually in practice, does not have much bite (recall, for instance, the comparison between models A, D in our earlier example). Similar to \citep{zafar2017parity}, our work can be thought of as a preference-based notions of fairness, but unlike their proposal our measures lead to a \emph{total} ordering among all alternatives, and can be utilized to measure both individual and group-level (un)fairness. 

Further discussion of related work can be found in Appendix~\ref{app:related}.


\section{Our Proposed Family of Measures}\label{sec:model}
We consider the standard supervised learning setting: A learning algorithm receives the training data set $D=\{(\vx_i,y_i)\}_{i=1}^n$ consisting of $n$ instances, where $\vx_i \in \cX$ specifies the feature vector for individual $i$ and $y_i \in \cY$, the \textit{ground truth} label for him/her. 
The training data is sampled i.i.d. from a distribution $P$ on $\cX \times\cY$.
Unless specified otherwise, we assume $\cX \subseteq \reals^k$, where $k$ denotes the number of features. To avoid introducing extra notation for an intercept, we assume feature vectors are in homogeneous form, i.e. the $k$th feature value is 1 for every instance.
%
%
The goal of a learning algorithm is to use the training data to fit a \emph{model} (or hypothesis) $h: \cX \rightarrow \cY$ that accurately predicts the label for new instances. Let $\cH$ be the hypothesis class consisting of all the models the learning algorithm can choose from.
A learning algorithm receives $D$ as the input; then utilizes the data to select a model $h \in \cH$ that minimizes some notion of loss, $L_D(h)$. When $h$ is clear from the context, we use $\hy_i$ to refer to $h(\vx_i)$.  

We assume there exists a benefit function $b: \cY \times \cY \rightarrow \reals$ that quantifies the benefit an individual with ground truth label $y$ receives, if the model predicts label $\hy$ for them.\footnote{Our formulation allows the benefit function to depend on $\vx$ and other available information about the individual. As long the formulation is linear in the predicted label $\hy$, our approach remains computationally efficient. For simplicity and ease of interpretation, however, we focus on benefit functions that depend on $y$ and $\hy$, only.} The benefit function is meant to capture the \textit{signed discrepancy} between an individual's predicted outcome and their true/deserved outcome.
Throughout, for simplicity we assume higher values of $\hy$ correspond to more desirable outcomes (e.g. loan or salary amount). With this assumption in place, a benefit function must assign a high value to an individual if their predicted label is greater (better) than their deserved label, and a low value if an individual receives a predicted label less (worse) than their deserved label. The following are a few examples of benefit functions that satisfy this: $b(y, \hy) = \hy - y$; $b(y, \hy) = log\left(1 + e^{\hy - y}\right)$; $b(y, \hy) = \hy/y$. 

\hl{In order to maintain the convexity of our fairness constraints, throughout this work, we will focus on benefit functions that are positive and linear in $\hy$. In general (e.g. when the prediction task is regression or multi-class classification) this limits the benefit landscape that can be expressed, but in the important special case of binary classification, the following Proposition establishes that this restriction is without loss of generality\footnote{All proofs can be found in Appendix~\ref{app:technical}.}. That is, we can attach an arbitrary combination of benefit values to the four possible $(y,\hy)$-pairs (i.e. false positives, false negatives, true positives, true negative). }
\begin{proposition}\label{prop:represent}
For $y,\hy \in \{0,1\}$, let $\bb_{y,\hy} \in \mathbb{R}$ be arbitrary constants specifying the benefit an individual with ground truth label $y$ receives when their predicted label is $\hy$. There exists a linear benefit function of form $c_y \hy + d_y$ such that for all $y,\hy \in \{0,1\}$, $b(y,\hy) = \bb_{y,\hy}$.
\end{proposition}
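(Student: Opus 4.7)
The plan is straightforward: for each fixed ground-truth label $y \in \{0,1\}$, the function $\hy \mapsto c_y \hy + d_y$ is an affine function of a single binary variable, so it is determined by its two values at $\hy = 0$ and $\hy = 1$. Hence I just need to solve, for each $y$, a $2 \times 2$ linear system in the unknowns $(c_y, d_y)$ to match the prescribed constants $\bb_{y,0}$ and $\bb_{y,1}$.

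More concretely, fix $y \in \{0,1\}$. Substituting $\hy = 0$ into the proposed form yields $d_y = \bb_{y,0}$, and substituting $\hy = 1$ yields $c_y + d_y = \bb_{y,1}$. Solving these two equations gives the unique choice
\[
d_y = \bb_{y,0}, \qquad c_y = \bb_{y,1} - \bb_{y,0}.
\]
Doing this independently for $y = 0$ and $y = 1$ produces two pairs of coefficients $(c_0,d_0)$ and $(c_1,d_1)$, which together define the desired benefit function $b(y,\hy) = c_y \hy + d_y$. By construction, $b(y,\hy) = \bb_{y,\hy}$ for every one of the four $(y,\hy)$-pairs in $\{0,1\}^2$, which is exactly what the proposition asks for.

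There is essentially no obstacle here: the statement reduces to noting that two degrees of freedom (slope and intercept, conditioned on $y$) are enough to interpolate two prescribed values, and the whole content is that the coefficients are allowed to depend on $y$. The only small thing worth flagging in the write-up is that linearity is in $\hy$ alone (with $y$-dependent coefficients), which is what keeps the induced constraint convex in $\hy$ while still permitting an arbitrary confusion-matrix-style benefit profile across the four cells (true/false positives/negatives). If one additionally wanted the benefit function to be \emph{positive} as mentioned in the surrounding text, one can simply add a common constant to all four $\bb_{y,\hy}$'s before applying the construction, since shifting all four values by the same constant only changes each $d_y$ by that constant and preserves all subsequent welfare comparisons up to an additive shift.
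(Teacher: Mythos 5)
Your proposal is correct and matches the paper's own proof essentially verbatim: both solve the $2\times 2$ system for each fixed $y$ and arrive at $d_y = \bb_{y,0}$ and $c_y = \bb_{y,1}-\bb_{y,0}$. The additional remark about restoring positivity via a common additive shift is a sensible aside but not part of the paper's argument.
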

In order for $\bb$'s in the above proposition to reflect the signed discrepancy between $y$ and $\hy$, it must hold that $\bb_{1,0} < \bb_{0,0} \leq \bb_{1,1} < \bb_{0,1}$.
Given a model $h$, we can compute its corresponding benefit profile $\vb = (b_1, \cdots, b_n)$ where $b_i$ denotes individual $i$'s benefit: $b_i = b(y_i,\hy_i)$. A benefit profile $\vb$ \emph{Pareto-dominates} $\vb'$ (or in short $\vb \succeq \vb'$), if for all $i=1,\cdots,n$, $b_i \geq b'_i$.

Following the economic models of risk attitude, we assume the existence of a utility function $u: \reals \rightarrow \reals$, where $u(b)$ represent the utility derived from algorithmic benefit $b$. We will focus on \emph{Constant Relative Risk Aversion (CRRA)} utility functions. 
In particular, we take $u(b) = b^\alpha$
where $\alpha=1$ corresponds to risk-neutral, $\alpha>1$ corresponds to risk-seeking, and $0\leq \alpha<1$ corresponds to risk-averse preferences. Our main focus in this work is on values of $0<\alpha<1$: the larger one's initial benefit is, the smaller the added utility he/she derives from an increase in his/her benefit. While in principle our model can allow for different risk parameters for different individuals ($\alpha_i$ for individual $i$), for simplicity throughout we assume all individuals have the same risk parameter.
Our measures assess the fairness of a decision making model via the expected \emph{utility} a randomly chosen, \emph{risk-averse} individual  receives as the result of being subject to decision making through that model. 
Formally, our measure is defined as follows: $\cU_{P}(h) =  \Exp_{(\vx_i,y_i) \sim P}\left[u\left(b(y_i,h(\vx_i)\right)\right]$.
We estimate this expectation by $\cU_{D}(h) = \frac{1}{n}\sum_{i=1}^n u(b(y_i, h(\vx_i)))$.
%

\paragraph{Connection to Cardinal Welfare}
Our proposed family of measures corresponds to a particular subset of cardinal social welfare functions. 
At a high level, a cardinal social welfare function is meant to rank different distributions of welfare across individuals, as more or less desirable in terms of distributive justice~\citep{moulin2004fair}. More precisely, let $\cW$ be a welfare function defined over benefit vectors, such that given any two benefit vectors $\vb$ and $\vb'$, $\vb$ is considered more desirable than $\vb'$ if and only if $\cW(\vb) \geq \cW(\vb')$.
The rich body of work on welfare economics offers several axioms to characterize the set of all welfare functions that pertain to collective rationality or fairness. Any such function, $\cW$, must satisfy the following axioms~\citep{sen1977weights,roberts1980interpersonal}:
\begin{enumerate}
\item \textbf{Monotonicity:} If $\vb' \succ \vb$, then $\cW(\vb') > \cW(\vb)$. That is, if everyone is better off under $\vb'$, then $\cW$ should strictly prefer it to $\vb$. 
\item \textbf{Symmetry:} $\cW(b_1,\dots,b_n) = \cW\left( b_{(1)},\cdots,b_{(n)}\right)$. That is, $\cW$ does not depend on the identity of the individuals, but only their benefit levels.
\item \textbf{Independence of unconcerned agents:} $\cW$ should be independent of individuals whose benefits remain at the same level. Formally, let $(\vb |^i a)$ be a benefit vector that is identical to $\vb$, expect for the $i$th component which has been replaced by $a$. The property requires that for all $\vb, \vb', a, c$, $\cW(\vb |^i a) \leq \cW(\vb' |^i a) \Leftrightarrow \cW(\vb |^i c) \leq \cW(\vb' |^i c)$. 
\end{enumerate}
It has been shown that every \emph{continuous}\footnote{That is, for every vector $\vb$, the set of vectors weakly better than $\vb$ (i.e. $\{ \vb': \vb' \succeq \vb\}$) and the set of vectors weakly worse than $\vb$ (i.e. $\{ \vb': \vb' \preceq \vb\}$) are closed sets.} social welfare function $\cW$ with properties 1--3 is additive and can be represented as $\sum _{i=1}^{n}w(b_{i})$.
According to the Debreu-Gorman theorem~\citep{debreu1959topological,gorman1968structure}, if in addition to 1--3, $\cW$ satisfies:
\begin{enumerate}
\setcounter{enumi}{3}
\item \textbf{Independence of common scale:} For any $c>0,$ $\cW(\vb) \geq \cW(\vb') \Leftrightarrow \cW(c\vb) \geq \cW(c\vb')$. The simultaneous rescaling of every individual benefit, should not affect the relative order of $\vb, \vb'$.
\end{enumerate}
then it belongs to the following one-parameter family: $\cW_\alpha(b_{1},\dots ,b_{n})=\sum _{i=1}^{n}w_\alpha(b_{i})$, where 
(a) for $\alpha>0$, $w_{\alpha}(b)=b^{\alpha}$; 
(b) for $\alpha=0$, $w_{\alpha}(b)=\ln(b)$;
and (c) for $\alpha<0$, $w_{\alpha}(b)=-b^{\alpha}$. Note that the limiting case of  $\alpha\to -\infty$ is equivalent to the leximin ordering (or Rawlsian max-min welfare). 

Our focus in this work is on $0 < \alpha < 1$. In this setting, our measures exhibit aversion to pure inequality. More precisely, they satisfy the following important property: 
\begin{enumerate}
\setcounter{enumi}{4}
\item \textbf{Pigou-Dalton transfer principle~\citep{pigou1912wealth,dalton1920measurement}:} Transferring benefit from a high-benefit to a low-benefit individual must increase social welfare, that is, for any $1\leq i<j \leq n$ and $0<\delta<\frac{b_{(j)} - b_{(i)}}{2}$, 
$ \cW(b_{(1)},\cdots, b_{(i)} + \delta,\cdots,b_{(j)}-\delta, \cdots, b_{(n)}) > \cW(\vb).$
\end{enumerate}

\subsection{Our In-processing Method to Guarantee Fairness}
To guarantee fairness, we propose minimizing loss subject to a lower bound on our measure:
\begin{eqnarray*}
\min_{h \in \cH} && L_{D}(h) \nonumber \\
\text{s.t.} && \cU_{D}(h) \geq \tau 
\end{eqnarray*}
where the parameter $\tau$ specifies a lower bound that must be picked carefully to achieve the right tradeoff between accuracy and fairness. 
As a concrete example, when the learning task is linear regression, $b(y,\hy)=\hy - y +1$,  and the degree of risk aversion in $\alpha$, this optimization amounts to:
\begin{eqnarray}\label{eq:opt_reg}
\min_{\vtheta \in \cH} && \sum_{i=1}^n (\vtheta.\vx_i - y_i )^2 \nonumber \\
\text{s.t.} && \sum_{i=1}^n(\vtheta.\vx_i - y_i + 1)^\alpha \geq \tau n 
\end{eqnarray}
Note that both the objective function and the constraint in (\ref{eq:opt_reg}) are convex in $\vtheta$, therefore, the optimization can be solved efficiently and exactly.

\paragraph{Connection to Inequality Measures} 
\citet{speicher2018a} recently proposed quantifying individual-level unfairness utilizing a particular inequality index, called generalized entropy. This measure satisfies four important axioms: symmetry, population invariance, 0-normalization\footnote{0-normalization requires the inequality index to be 0 if and only if the distribution is perfectly equal/uniform.}, and the Pigou–Dalton transfer principle. Our measures satisfy all the aforementioned axioms, except for 0-normalization. Additionally and in contrast with measures of inequality---where the goal is to capture interpersonal comparison of benefits---our measure is monotone and independent of unconcerned agents. The latter two are the fundamental properties that set our proposal apart from measures of inequality. 

\hl{Despite these fundamental differences, we will shortly observe in Section~\ref{sec:experiments} that lower-bounding our measures often in practice leads to low inequality. Proposition~\ref{prop:heuristic} provides a heuristic explanation for this: Imposing a lower bound on social welfare is equivalent to imposing an upper bound on inequality if we restrict attention to the region where benefit vectors are all of the same mean. More precisely, for a fixed mean benefit value, our proposed measure of fairness results in the same total ordering as the Atkinson's index~\citep{atkinson1970measurement}.}
The index is defined as follows:
$$
 A_\beta(b_1,\ldots,b_n)=
\begin{cases}
1-\frac{1}{\mu}\left(\frac{1}{n}\sum_{i=1}^{n}b_{i}^{1-\beta}\right)^{1/(1-\beta)}
& \mbox{for}\ 0 \leq \beta \neq 1 \\
1-\frac{1}{\mu}\left(\prod_{i=1}^{n}b_{i}\right)^{1/n}
& \mbox{for}\ \beta=1,
\end{cases}
$$
where $\mu = \frac{1}{n} \sum_{i=1}^n b_i$  is the mean benefit. Atkinson's inequality index is a \emph{welfare}-based measure of inequality: The measure compares the actual average benefit individuals receive under benefit distribution $\vb$ (i.e. $\mu$) with its Equally Distributed Equivalent (EDE)---the level of benefit that if obtained by
every individual, would result in the same level of welfare as that of $\vb$ (i.e. $\frac{1}{n}\sum_{i=1}^{n}b_{i}^{1-\beta}$). 
It is easy to verify that for $0<\alpha<1$, the generalized entropy and Atkinson index result in the same total ordering among benefit distributions (see Proposition~\ref{prop:atge} in Appendix~\ref{app:technical}).
Furthermore,  for a fixed mean benefit $\mu$, our measure results in the same indifference curves and total ordering as the Atkinson index with $\beta = 1-\alpha$. 
\begin{proposition}\label{prop:heuristic}
Consider two benefit vectors $\vb,\vb' \succ \mathbf{0}$ with equal means $(\mu = \mu')$. For $0 < \alpha<1$, $A_{1-\alpha}(\vb) \geq A_{1-\alpha}(\vb')$ if and only if $\cW_{\alpha}(\vb) \leq \cW_{\alpha}(\vb')$.
\end{proposition}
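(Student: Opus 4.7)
The plan is to show that the equivalence reduces to a transparent algebraic identity once we substitute $\beta = 1-\alpha$ into the definition of Atkinson's index and recognize the welfare functional inside it. Since $0 < \alpha < 1$, we have $\beta = 1-\alpha \in (0,1)$ and in particular $\beta \neq 1$, so we are in the first case of the piecewise definition of $A_\beta$.

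First I would rewrite $A_{1-\alpha}(\vb)$ using $1-\beta = \alpha$:
\[
A_{1-\alpha}(\vb) \;=\; 1 - \frac{1}{\mu}\left(\frac{1}{n}\sum_{i=1}^n b_i^{\alpha}\right)^{1/\alpha}
\;=\; 1 - \frac{1}{\mu}\left(\frac{\cW_\alpha(\vb)}{n}\right)^{1/\alpha},
\]
using the fact that $\cW_\alpha(\vb) = \sum_{i=1}^n b_i^\alpha$ for $\alpha > 0$. The analogous expression holds for $\vb'$ with $\mu$ replaced by $\mu'$.

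Next, using the hypothesis $\mu = \mu'$, the inequality $A_{1-\alpha}(\vb) \geq A_{1-\alpha}(\vb')$ becomes
\[
-\frac{1}{\mu}\left(\frac{\cW_\alpha(\vb)}{n}\right)^{1/\alpha} \;\geq\; -\frac{1}{\mu}\left(\frac{\cW_\alpha(\vb')}{n}\right)^{1/\alpha},
\]
equivalently $\left(\cW_\alpha(\vb)/n\right)^{1/\alpha} \leq \left(\cW_\alpha(\vb')/n\right)^{1/\alpha}$ (since $\mu > 0$, as $\vb \succ \mathbf{0}$ implies $\mu > 0$). Finally, because $\vb, \vb' \succ \mathbf{0}$ gives $\cW_\alpha(\vb), \cW_\alpha(\vb') > 0$, and since $1/\alpha > 0$ makes the map $x \mapsto x^{1/\alpha}$ strictly increasing on $(0,\infty)$, we may strip off the exponent and the factor $1/n$ to conclude $\cW_\alpha(\vb) \leq \cW_\alpha(\vb')$, and vice versa.

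There is no real obstacle here; the whole statement is a one-line algebraic identity once one notices that the inner sum defining the EDE in Atkinson's index is, up to normalization, exactly the welfare $\cW_\alpha$ under the parameter matching $\beta = 1-\alpha$. The only things to be careful about are: (i) checking that the substitution lands in the $\beta \neq 1$ branch (which it does since $\alpha < 1$ gives $\beta > 0$ and $\alpha > 0$ gives $\beta < 1$); (ii) verifying positivity conditions so that the monotone transformations $x \mapsto x^{1/\alpha}$ and $x \mapsto -x/\mu$ preserve or flip the inequality as claimed; and (iii) remembering that the equal-mean assumption is what permits $\mu$ to cancel — without it the correspondence is only preserved along indifference curves of fixed mean, as discussed in the paragraph preceding the proposition.
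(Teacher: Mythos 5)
Your proof is correct and follows essentially the same route as the paper's: substitute $\beta = 1-\alpha$, recognize $\sum_i b_i^\alpha = \cW_\alpha(\vb)$ inside the Atkinson index, cancel the common mean, and strip the strictly increasing map $x \mapsto x^{1/\alpha}$. You are in fact slightly more careful than the paper in noting the positivity conditions and the branch of the piecewise definition, but the argument is the same.
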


\paragraph{Tradeoffs Among Different Notions of Fairness}
We end this section by establishing the existence of multilateral tradeoffs among social welfare, accuracy, individual, and statistical notions of fairness. We illustrate this by finding the predictive model that optimizes each of these quantities. In Table~\ref{tab:opt} we compare these optimal predictors in two different cases: 1) In the \emph{realizable} case, we assume the existence of a hypothesis $h^* \in \cH$ such that $y = h^*(\vx)$, i.e., $h^*$ achieves perfect prediction accuracy. 2) In the \emph{unrealizable} case, we assume the existence of a hypothesis $h^* \in \cH$, such that $h^*(\vx) = \Exp[y|\vx])$, i.e., $h^*$ is the Bayes Optimal Predictor.  We use the following notations: $y_{\text{max}} = \max_{h \in \cH , \vx \in \cX} h(\vx)$ and $y_{\text{min}} = \min_{h \in \cH , \vx \in \cX} h(\vx)$. The precise definition of each notion in Table~\ref{tab:opt} can be found in Appendix~\ref{app:sim}. 

\begin{table}[h!]
  \caption{Optimal predictions with respect to different fairness notions.}
  \label{tab:opt}
  \begin{center}
  { \small
	\begin{tabular}{  c | c | c | c | c |}
  		\cline{2-5}
   	& \multicolumn{2}{c|}{Classification} & \multicolumn{2}{c |}		{Regression}\\
	  \cline{2-5}
   		& Realizable & Unrealizable  & Realizable & Unrealizable \\
  \hline
  \multicolumn{1}{|c|}{Social welfare} & $\hy \equiv 1$ & $\hy \equiv 1$ & $\hy \equiv y_{\text{max}}$ & $\hy \equiv y_{\text{max}}$ \\ \cline{1-1}
  \multicolumn{1}{|c|}{Atkinson index} & $\hy = h^*(\vx)$ & $\hy \equiv 1$ & $\hy = h^*(\vx)$ & $\hy \equiv y_{\text{max}}$ \\ \cline{1-1}
  \multicolumn{1}{|c|}{Dwork et al.'s notion}  & $\hy \equiv 0$ or $1$ & $\hy \equiv 1$ or $0$ & $\hy \equiv c$ & $\hy \equiv c$ \\ \cline{1-1}
  \multicolumn{1}{|c|}{Mean difference} & $\hy \equiv 0$ or $1$ & $\hy \equiv 1$ or $0$ & $\hy \equiv c$ & $\hy \equiv c$ \\ \cline{1-1}
  \multicolumn{1}{|c|}{Positive residual diff.} & $\hy \equiv 0$ or $\hy = h^*(\vx)$ & $\hy \equiv 0$ & $\hy \equiv y_{\text{min}}$ or $\hy = h^*(\vx)$& $\hy \equiv y_{\text{min}}$ \\ \cline{1-1}
  \multicolumn{1}{|c|}{Negative residual diff.} & $\hy \equiv 1$ or $\hy = h^*(\vx)$ & $\hy \equiv 1$ & $\hy \equiv y_{\text{max}}$ or $\hy = h^*(\vx)$ & $\hy \equiv y_{\text{max}}$ \\ \cline{1-1}
  \hline
\end{tabular}}
\end{center}
\end{table}
As illustrated in Table~\ref{tab:opt}, there is no unique predictors that simultaneously optimizes social welfare, accuracy, individual, and statistical notions of fairness.
Take the unrealizable classification as an example. Optimizing for accuracy requires the predictions to follow the Bayes optimal classifier. A lower bound on social welfare requires the model to predict the desirable outcome (i.e. 1) for a large fraction of the population. To guarantee low positive residual difference, all individuals must be predicted to belong to the negative class. 
In the next Section, we will investigate these tradeoffs in more detail and through experiments on two real-world datasets.

\section{Experiments}\label{sec:experiments}
In this section, we empirically illustrate our proposal, and investigate the tradeoff between our family of measures and accuracy, as well as existing definitions of group discrimination and individual fairness. 
We ran our experiments on a classification data set (Propublica's \emph{COMPAS dataset}~\citep{propublica_data}), as well as a regression dataset (\emph{Crime and Communities data set}~\citep{communities_crime_dataset}).\footnote{A more detailed description of the data sets and our preprocessing steps can be found in Appendix~\ref{app:sim}.} For regression, we defined the benefit function as follows: $b(y,\hy) = \hy - y +1$. On the Crime data set this results in benefit levels between $0$ and $2$. For classification, we defined the benefit function as follows: $b(y,\hy) = c_y \hy + d_y$ where $y \in \{-1,1\}$, $c_1 = 0.5, d_1 = 0.5$, and $c_{-1} = 0.25, d_{-1}= 1.25$. This results in benefit levels $0$ for false negatives, $1$ for true positives and true negatives, and $1.5$ for false positives. 

\paragraph{Welfare as a Measure of Fairness}
\hl{Our proposed family of measures is \emph{relative} by design: It allows for meaningful comparison among different unfair alternatives. Furthermore, there is no unique value of our measures that always correspond to perfect fairness. This is in contrast with previously proposed, \emph{absolute} notions of fairness which characterize the \emph{condition} of \emph{perfect} fairness---as opposed to measuring the degree of unfairness of various unfair alternatives.} 
We start our empirical analysis by illustrating that our proposed measures can compare and rank different predictive models.
We trained the following models on the COMPAS dataset: a multi-layered perceptron, fully connected with one hidden layer with 100 units (NN), the AdaBoost classifier (Ada), Logistic Regression (LR), a decision tree classifier (Tree), a nearest neighbor classifier (KNN).
Figure~\ref{fig:ranking} illustrates how these learning models compare with one another according to accuracy, Atkinson index, and social welfare. All values were computed using 20-fold cross validation. The confidence intervals are formed assuming samples come from Student's t distribution. As shown in Figure~\ref{fig:ranking}, the rankings obtained from Atkinson index and social welfare are identical. Note that this is consistent with Proposition~\ref{prop:heuristic}. Given the fact that all models result in similar mean benefits, we expect the rankings to be consistent. 
\begin{figure*}[t!]
    \centering
    \begin{subfigure}[b]{0.32\textwidth}
        \includegraphics[width=\textwidth]{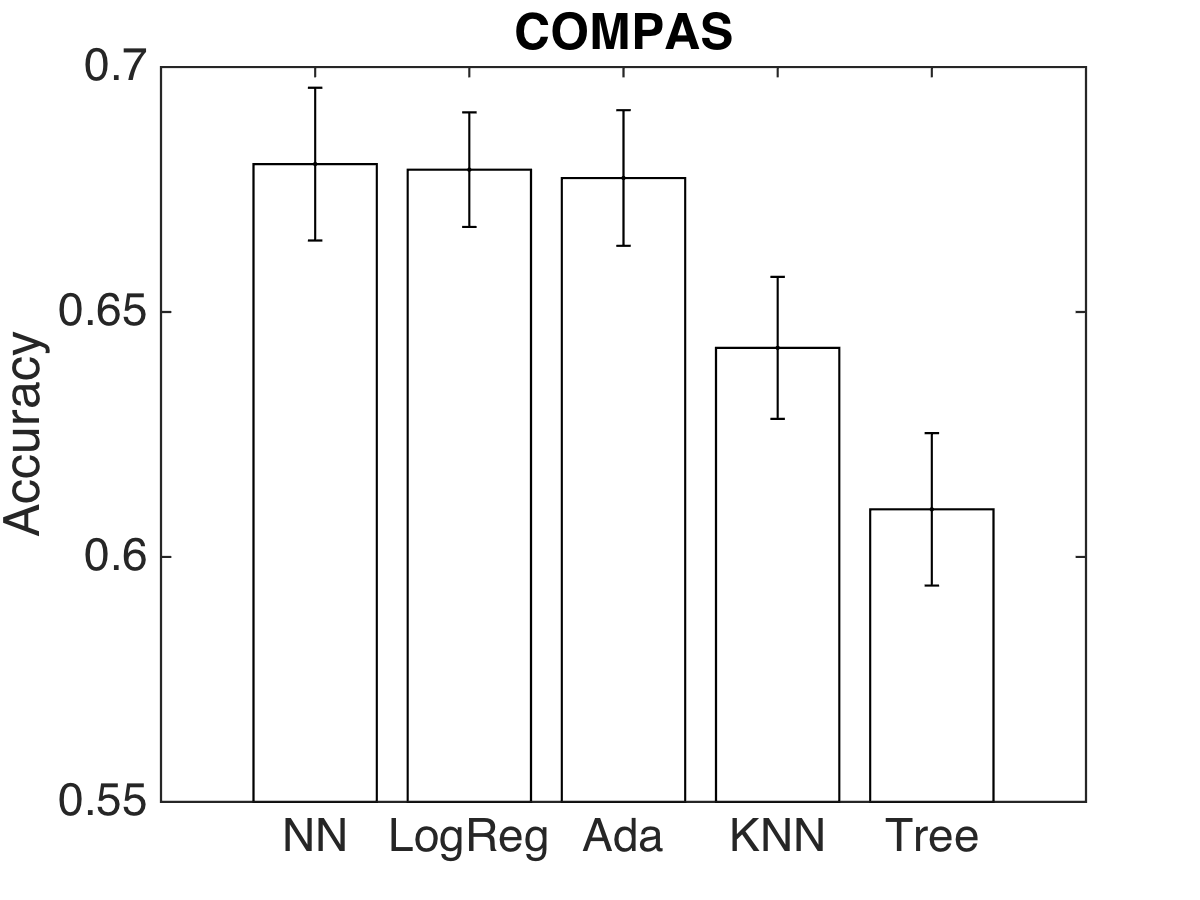}
        \label{fig:compas_dw}
    \end{subfigure}
    \begin{subfigure}[b]{0.32\textwidth}
        \includegraphics[width=\textwidth]{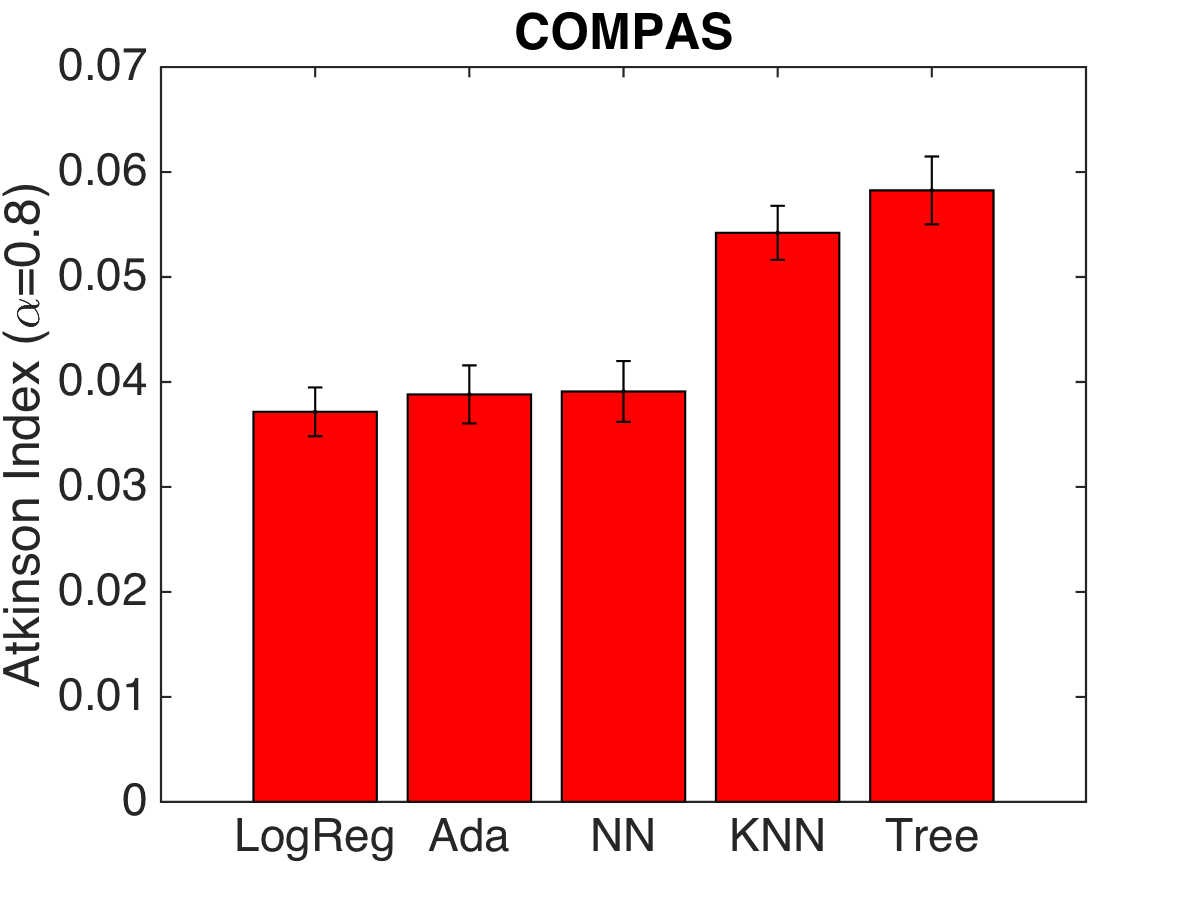}
        \label{fig:compas_at}
    \end{subfigure}
    \begin{subfigure}[b]{0.32\textwidth}
        \includegraphics[width=\textwidth]{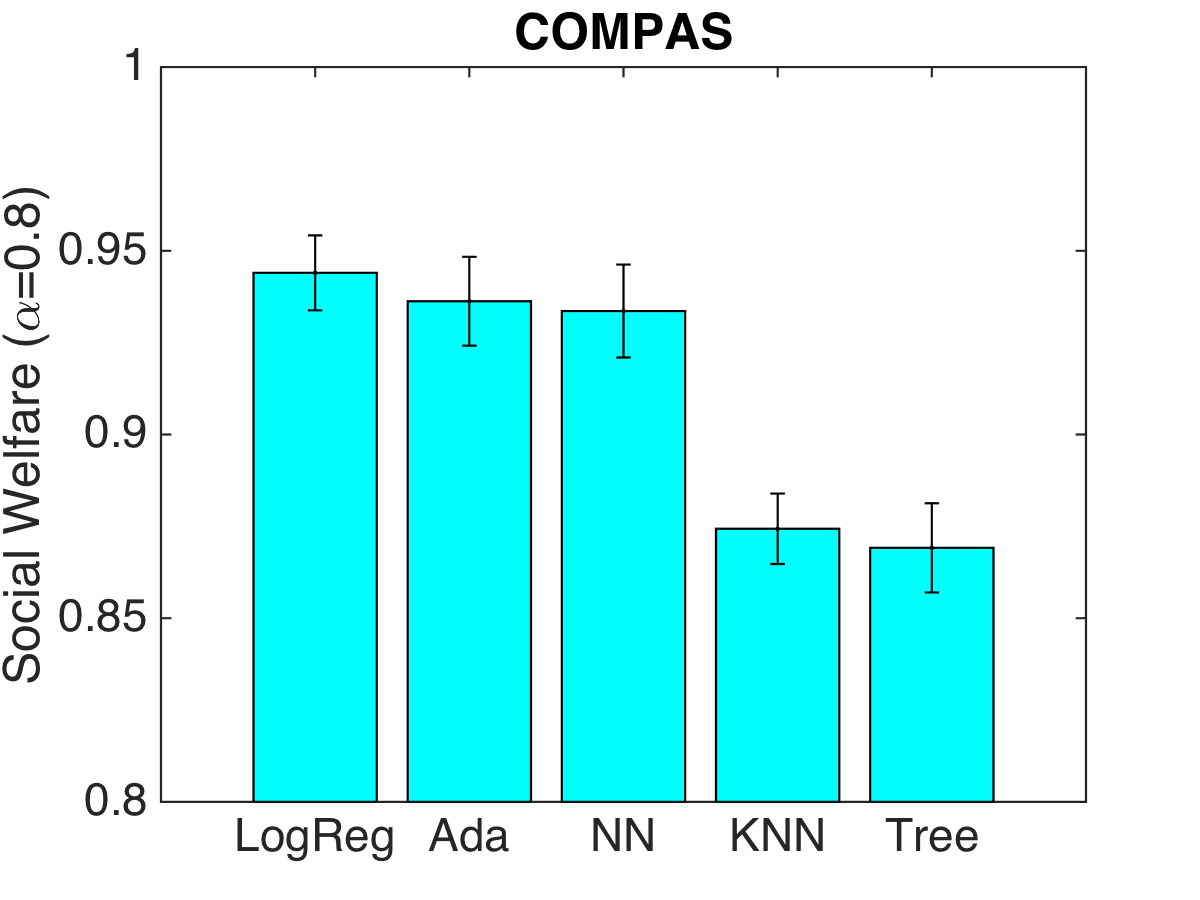}
        \label{fig:compas_sw}
    \end{subfigure}
   \caption{Comparison of different learning models according to accuracy, social welfare ($\alpha = 0.8$) and Atkinson index ($\beta = 0.2$).  The mean benefits are 0.97 for LogReg, 0.96 for NN, 0.96 for AdaBoost, 0.89 for KNN, and 0.89 for Tree.
Note that for Atkinson measure, smaller values correspond to fairer outcomes, where as for social welfare larger values reflect greater fairness. }\label{fig:ranking}
\end{figure*}

\paragraph{Impact on Model Parameters}\label{sec:accuracy}
Next, we study the impact of changing $\tau$ on the trained model parameters (see Figure~\ref{fig:weights}).
We observe that as $\tau$ increases, the intercept continually rises to guarantee high levels of benefit and social welfare. On the COMPAS dataset, we notice an interesting trend for the binary feature sex (0 is female, 1 is male); initially being male has a negative weight and thus a negative impact on the classification outcome, but as $\tau$ is increased, the sign changes to positive to ensure men also get high benefits.
The trade-offs between our proposed measure and prediction accuracy can be found in Figure~\ref{fig:loss} in Appendix~\ref{app:sim}. As one may expect, imposing more restrictive fairness constraints (larger $\tau$ and smaller $\alpha$), results in higher loss of accuracy.

\begin{figure*}[t!]
    \centering
    \begin{subfigure}[b]{0.32\textwidth}
        \includegraphics[width=\textwidth]{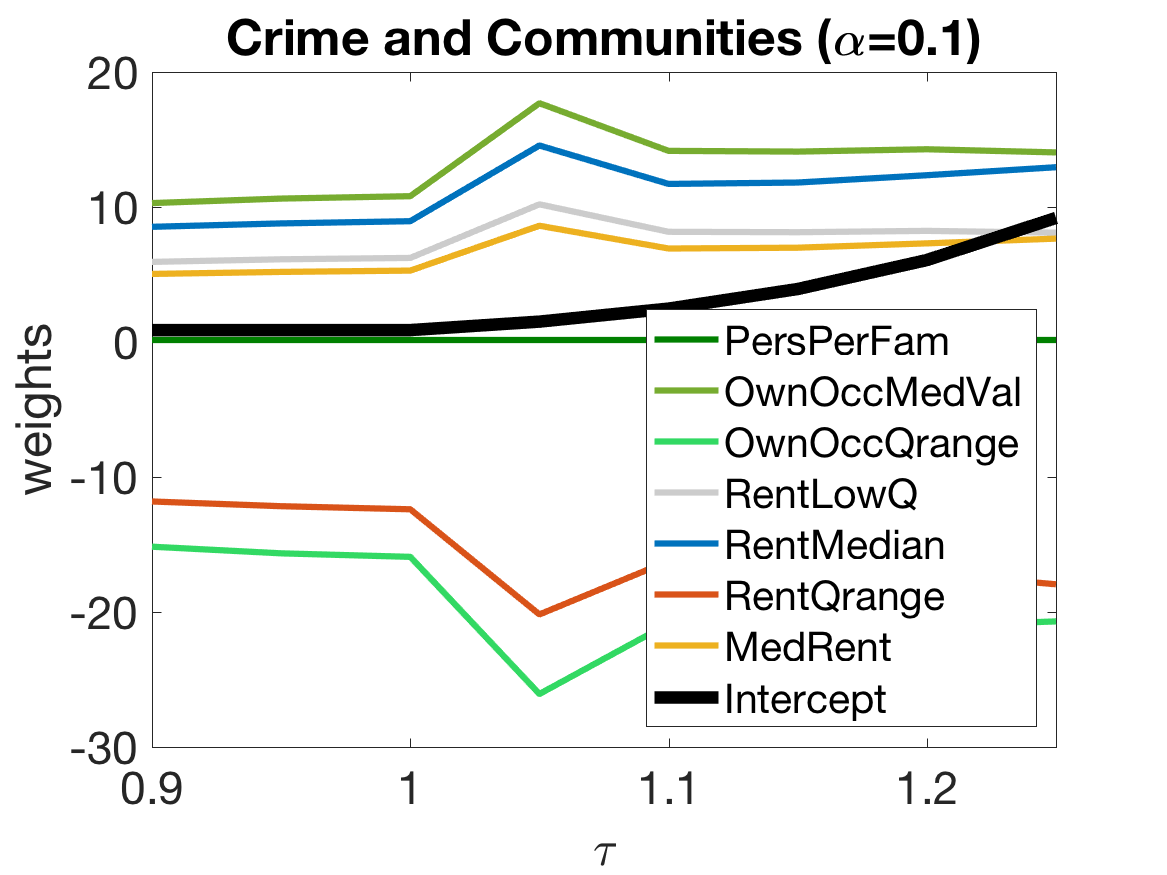}
    \end{subfigure}
    \begin{subfigure}[b]{0.32\textwidth}
        \includegraphics[width=\textwidth]{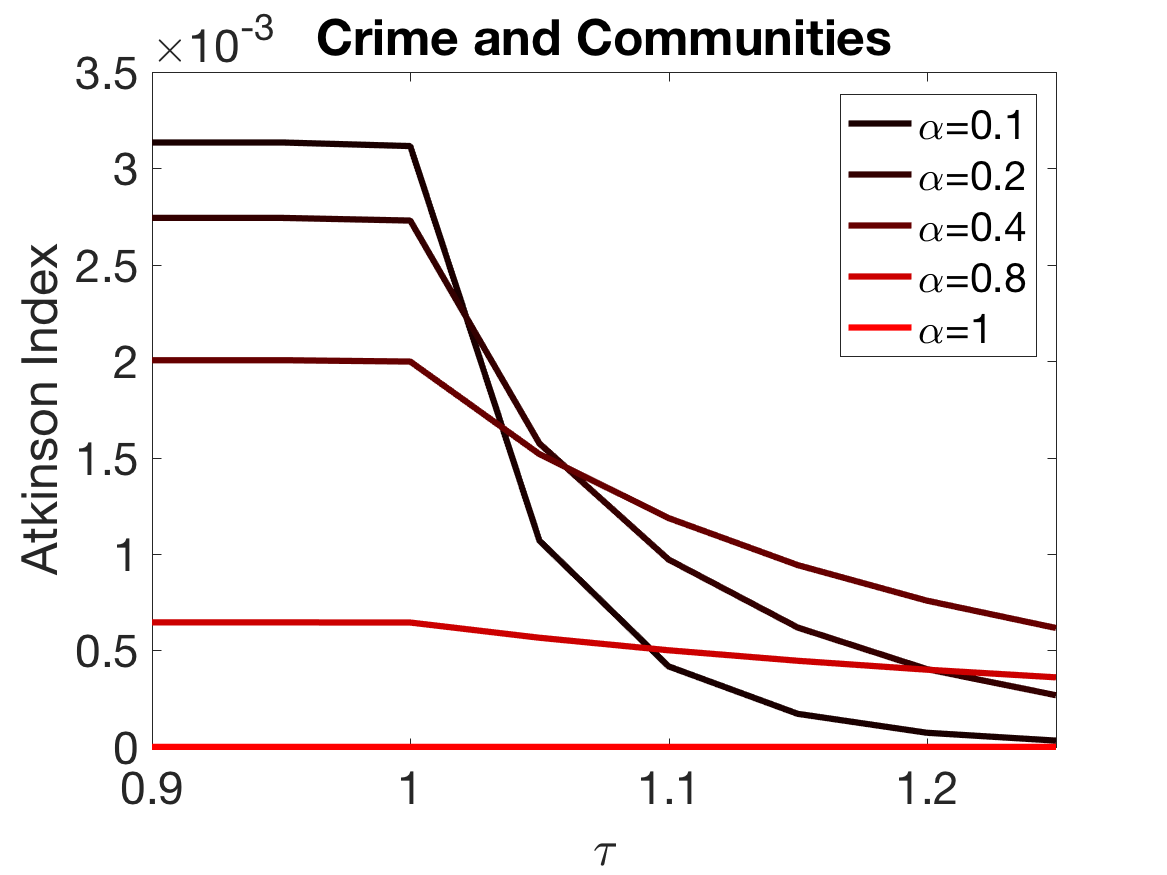}
    \end{subfigure}
    \begin{subfigure}[b]{0.32\textwidth}
        \includegraphics[width=\textwidth]{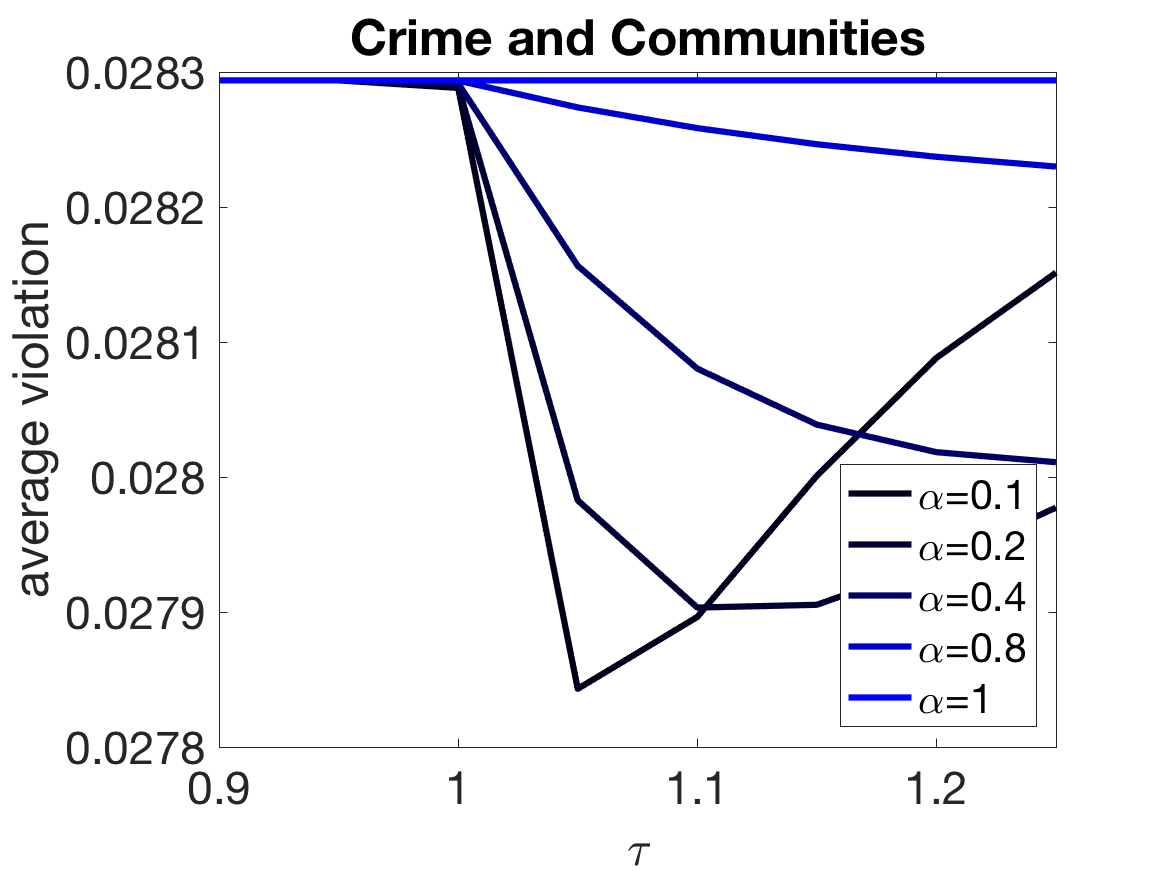}
    \end{subfigure}
    \begin{subfigure}[b]{0.32\textwidth}
        \includegraphics[width=\textwidth]{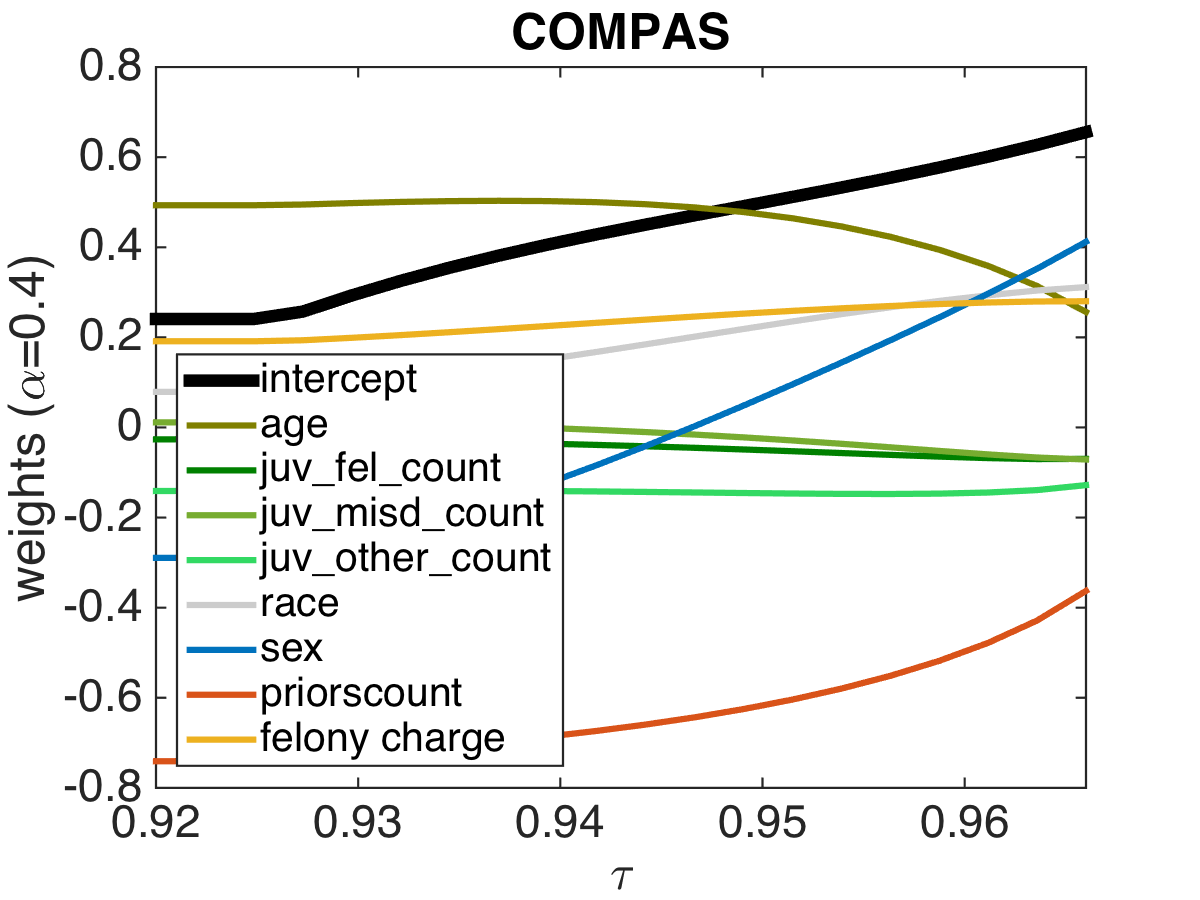}
        \caption{}
        \label{fig:weights}
    \end{subfigure}
    \begin{subfigure}[b]{0.32 \textwidth}
        \includegraphics[width=\textwidth]{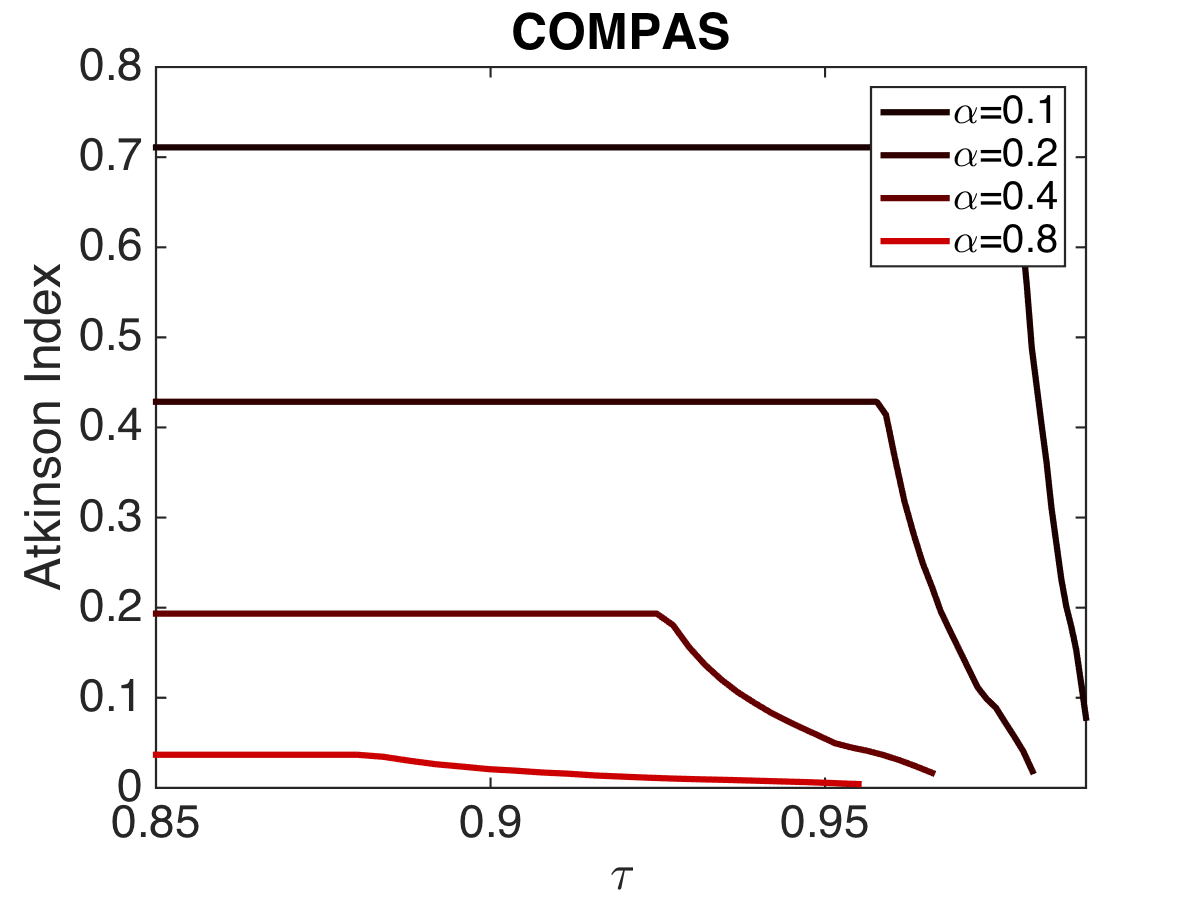}
        \caption{}
        \label{fig:Atkinson}
    \end{subfigure}        
    \begin{subfigure}[b]{0.32\textwidth}
        \includegraphics[width=\textwidth]{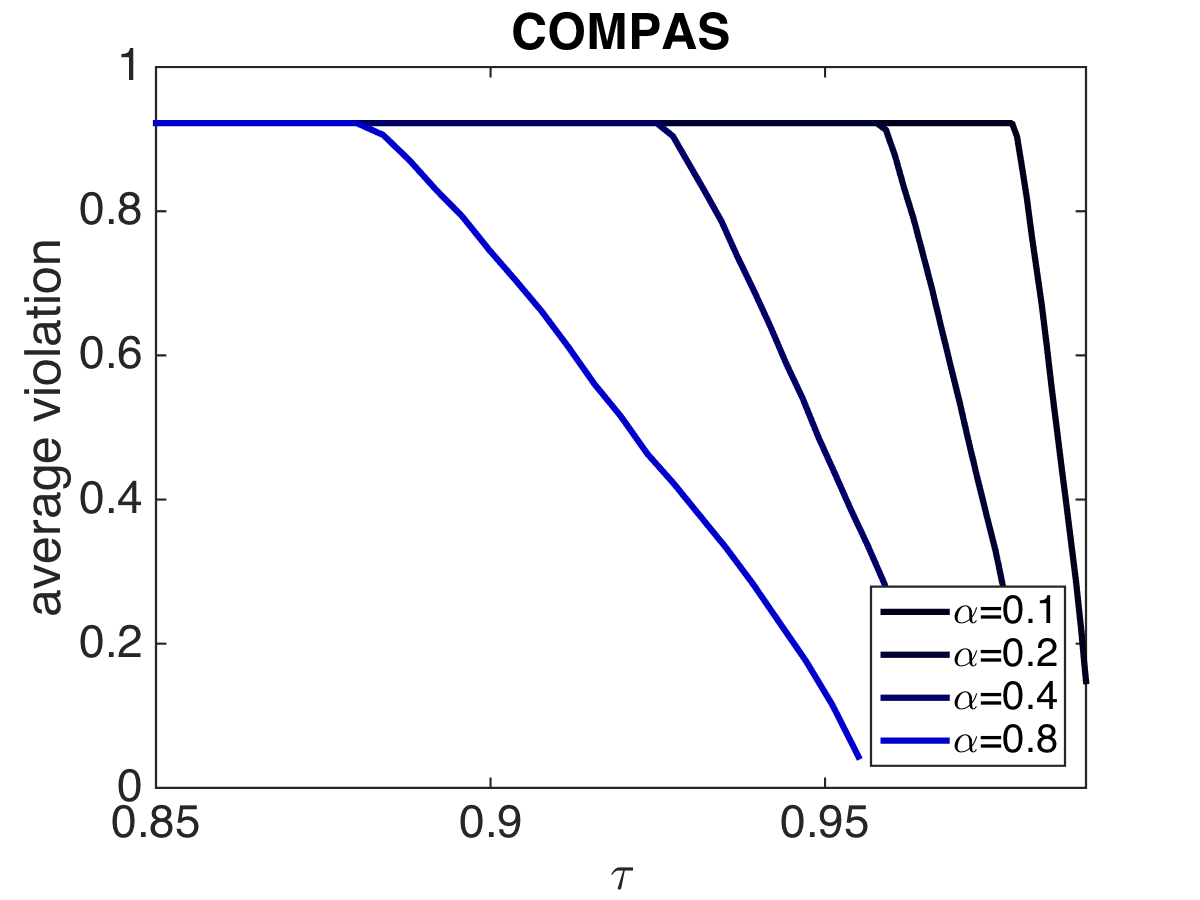}
        \caption{}
        \label{fig:Dwork}
    \end{subfigure}
    \caption{(a) Changes in weights---$\vtheta$ in linear and logistic regression---as the function of $\tau$. Note the continuous rise of the intercept with $\tau$. (b) Atkinson index as a function of the threshold $\tau$. Note the consistent decline in inequality as $\tau$ increases. (c) Average violation of \citeauthor{dwork2012fairness}'s constraints as a function of $\tau$. Trends are different for regression and classification.}
\end{figure*}

\hl{
Next, we will empirically investigate the tradeoff between our family of measures and existing definitions of group discrimination and individual fairness. Note that since our proposed family of measures is relative, we believe it is more suitable to focus on tradeoffs as opposed to \emph{impossibility} results. (Existing impossibility results (e.g. \citep{kleinberg2016inherent}) establish that a number of absolute notions of fairness cannot hold simultaneously.) }

\paragraph{Trade-offs with Individual Notions}
Figures~\ref{fig:Atkinson}, \ref{fig:Dwork} illustrate the impact of bounding our measure on existing individual measures of fairness. As expected, we observe that higher values of $\tau$ (i.e. social welfare) consistently result in lower inequality. Note that for classification, $\tau$ cannot be arbitrarily large (due to the infeasibility of achieving arbitrarily large social welfare levels). Also as expected, smaller $\alpha$ values (i.e. higher degrees of risk aversion) lead to a faster drop in inequality. The impact of our mechanism on the average violation of \citeauthor{dwork2012fairness}'s constraints is slightly more nuanced: as $\tau$ increases, initially the average violation of \citeauthor{dwork2012fairness}'s pairwise constraints go down. For classification, the decline continues until the measure reaches $0$---which is what we expect the measure to amount to once almost every individual receives the positive label. For regression in contrast, the initial decline is followed by a phase in which the measure quickly climbs back up to its initial (high) value. The reason is for larger values of $\tau$, the high level of social welfare is achieved mainly by means of adding a large intercept to the unconstrained model's predictions (see Figure~\ref{fig:weights}). Due to its translation invariance property, the addition of an intercept cannot limit the average violation of \citeauthor{dwork2012fairness}'s constraints. 

\paragraph{Trade-offs with Statistical Notions}
Next, we illustrate the impact of bounding our measure on statistical measures of fairness. 
For the Crime and Communities dataset, we assumed a neighborhood belongs to the protected group if and only if the majority of its residents are non-Caucasian, that is, the percentage of African-American, Hispanic, and Asian residents of the neighborhood combined, is above $50\%$. For the COMPAS dataset we took race as the sensitive feature. 
Figure~\ref{fig:FNR} shows the impact of $\tau$ and $\alpha$ on false negative rate difference and its continuous counterpart, negative residual difference. As expected, both quantities decrease with $\tau$ until they reach $0$---when everyone receives a label at least as large as their ground truth.  The trends are similar for false positive rate difference and its continuous counterpart, positive residual difference (Figure~\ref{fig:FPR}).
Note that in contrast to classification, on our regression data set, even though positive residual difference decreases with $\tau$, it never reaches 0. 
Figure~\ref{fig:DP} shows the impact of $\tau$ and $\alpha$ on demographic parity and its continuous counterpart, means difference. Note the striking similarity between this plot and Figure~\ref{fig:Dwork}. Again here for large values of $\tau$, guaranteeing high social welfare requires adding a large intercept to the unconstrained model's prediction. See Proposition~\ref{prop:intercept} in Appendix~\ref{app:technical}, where we formally prove this point for the special case of linear predictors. The addition of intercept in this fashion, cannot put an upper-bound on a translation-invariant measure like mean difference.

\begin{figure*}[t!]
    \centering
    \begin{subfigure}[b]{0.32\textwidth}
        \includegraphics[width=\textwidth]{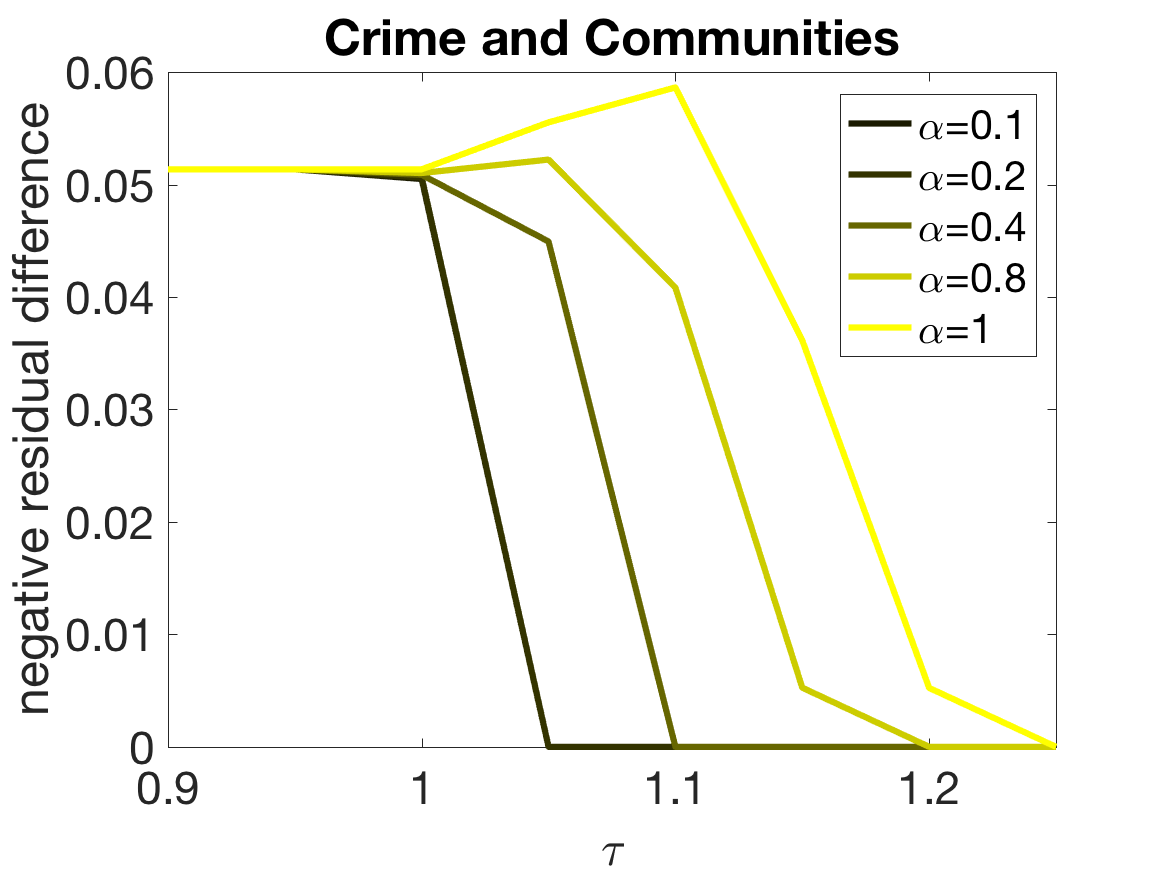}
    \end{subfigure}
    \begin{subfigure}[b]{0.32\textwidth}
        \includegraphics[width=\textwidth]{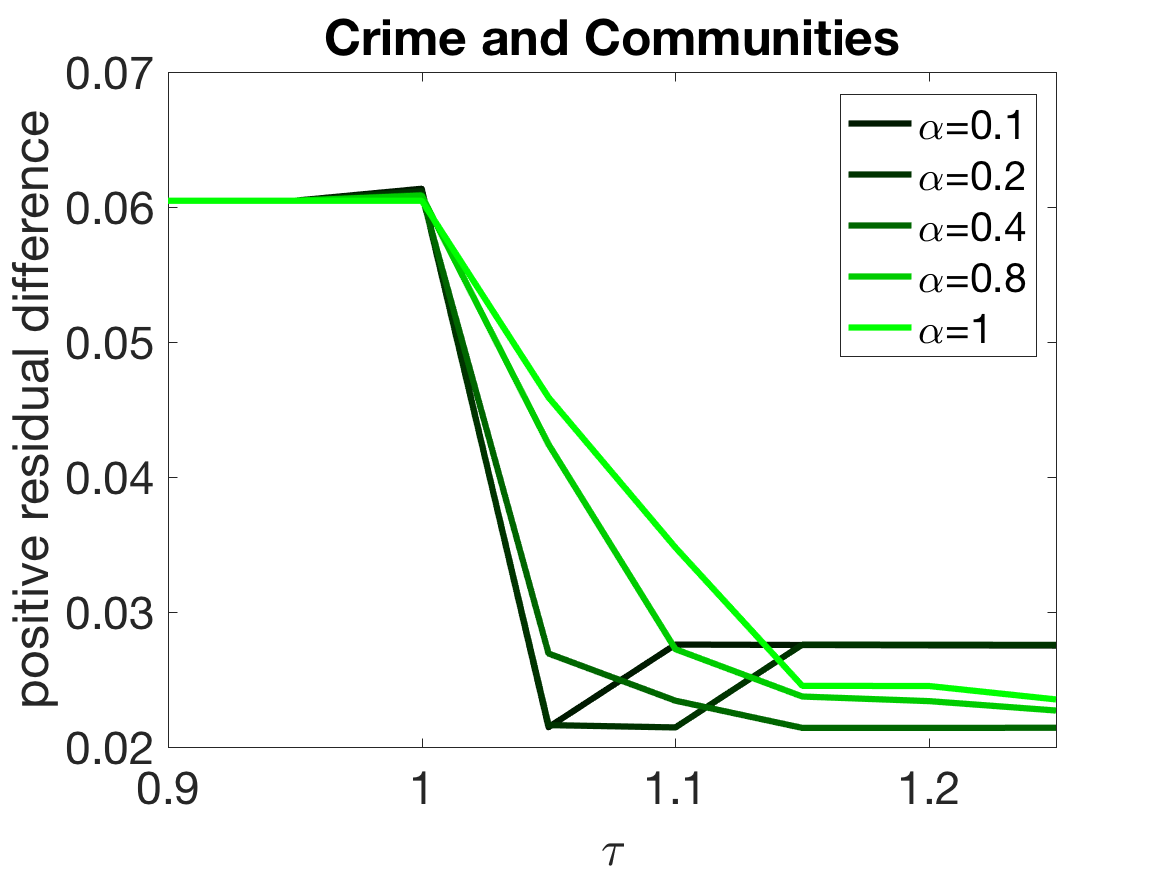}
    \end{subfigure}
    \begin{subfigure}[b]{0.32\textwidth}
        \includegraphics[width=\textwidth]{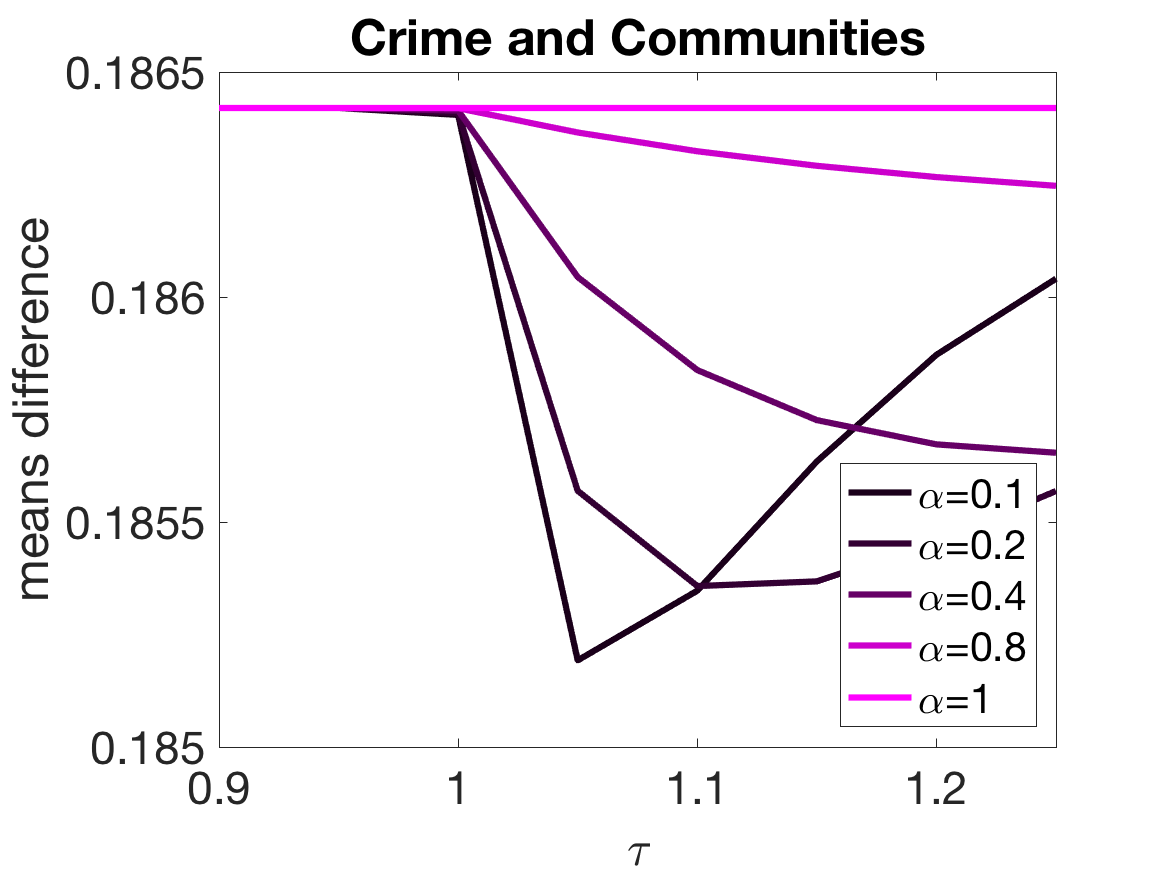}
    \end{subfigure}
    \centering
    \begin{subfigure}[b]{0.32\textwidth}
        \includegraphics[width=\textwidth]{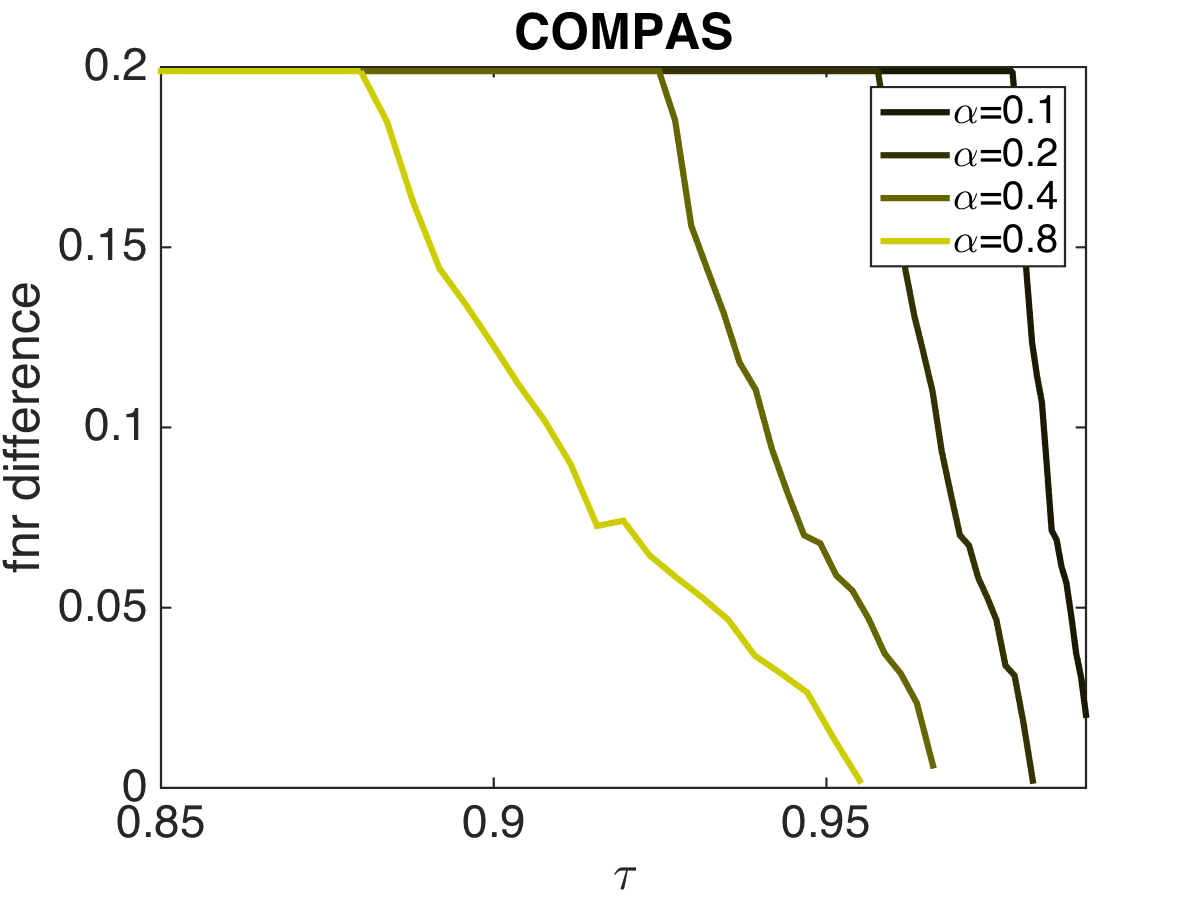}
        \caption{}
        \label{fig:FNR}
    \end{subfigure}
    \begin{subfigure}[b]{0.32\textwidth}
        \includegraphics[width=\textwidth]{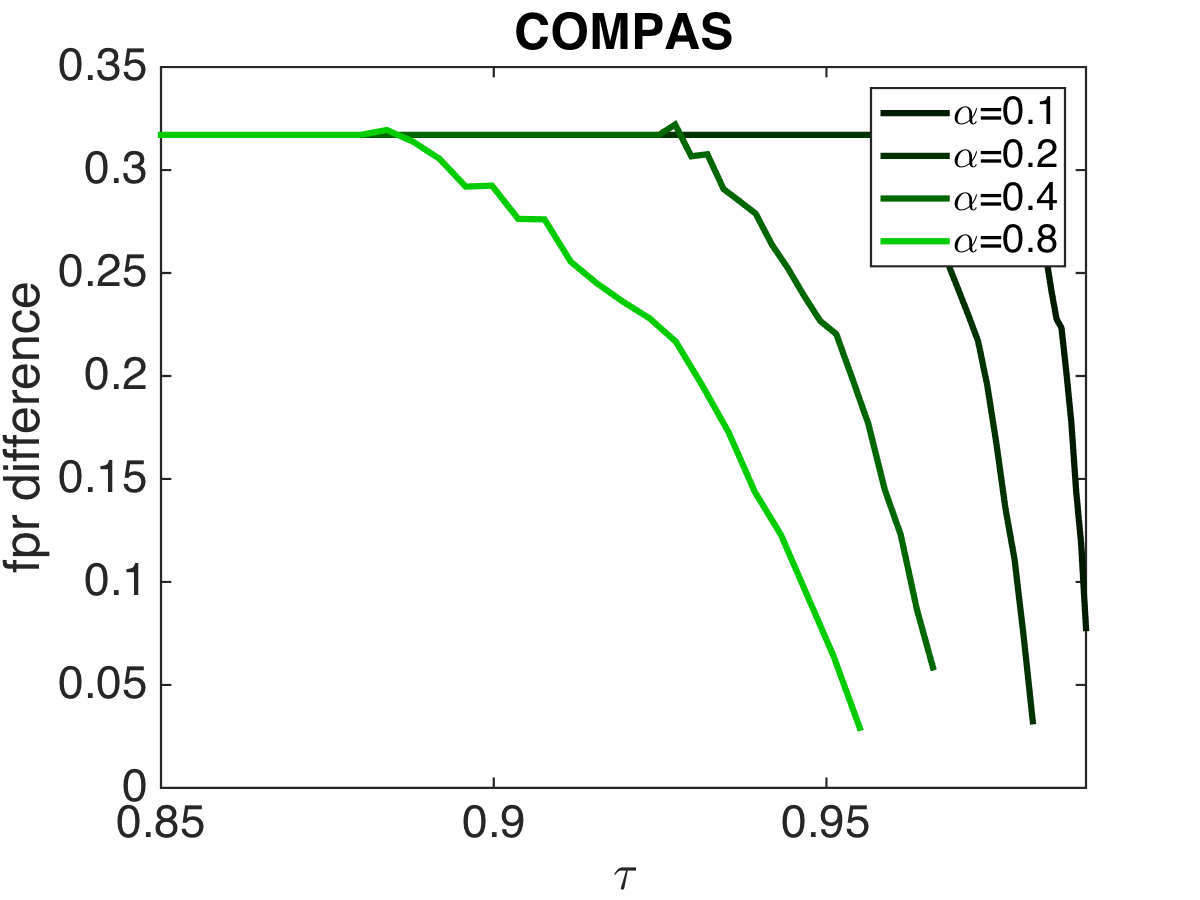}
        \caption{}
        \label{fig:FPR}
    \end{subfigure}
    \begin{subfigure}[b]{0.32\textwidth}
        \includegraphics[width=\textwidth]{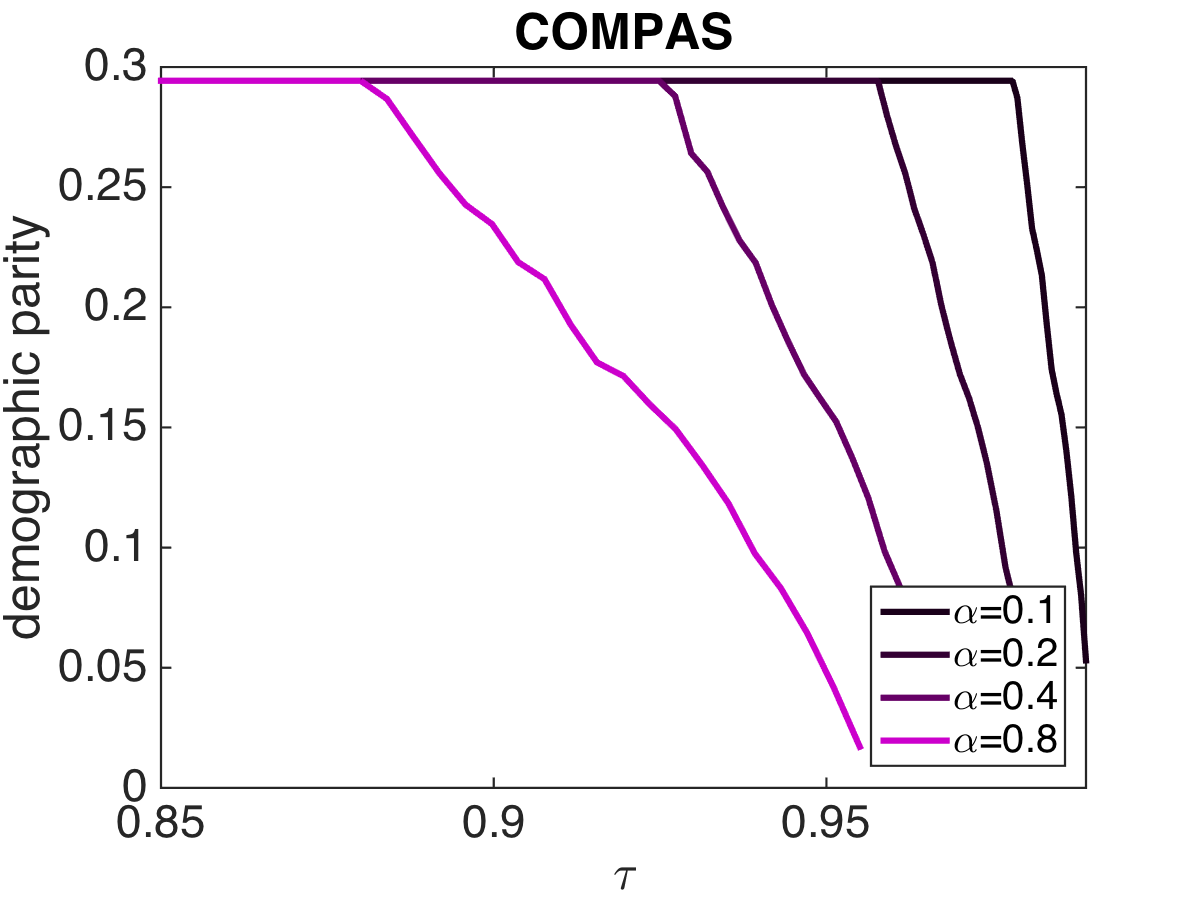}
        \caption{}
        \label{fig:DP}
    \end{subfigure}
    \caption{Group discrimination as a function of $\tau$ for different values of $\alpha$. (a) Negative residual difference is decreasing with $\tau$ and approaches $0$. (b) Positive residual difference monotonically approaches a certain asymptote. (c) Note the striking similarity of patterns for the average violation of \citeauthor{dwork2012fairness}'s constraints and mean difference.}\label{fig:group}
\end{figure*}


\section{Summary and Future Directions}\label{sec:conclusion}
Our work makes an important connection between the growing literature on fairness for machine learning, and the long-established formulations of cardinal social welfare in economics.
Thanks to their convexity, our measures can be bounded as part of any convex loss minimization program. 
We provided evidence suggesting that constraining our measures often leads to bounded inequality in algorithmic outcomes.
%
Our focus in this work was on a \emph{normative} theory of how \emph{rational} individuals should compare different algorithmic alternatives. We plan to extend our framework to \emph{descriptive} behavioural theories, such as prospect theory~\citep{kahneman2013prospect}, to explore the \emph{human perception of fairness} and contrast it with normative prescriptions.

\bibliographystyle{named}
\bibliography{fairness_biblio}
\appendix
\section{Related Work (Continued)}\label{app:related}
Also related to our work is \citep{corbett2017algorithmic}, where authors propose maximizing an objective called ``immediate utility'' while satisfying existing fairness constraints. 
Immediate utility is meant to capture the impact of a decision rule on the society (e.g. on public safety when the task is to predict recidivism), and is composed of two terms:
the expected number of true positives (e.g. number of crimes prevented), and the expected cost of positive labels (e.g. cost of detention). 
Note that our proposal is conceptually different from immediate utility in that we are concerned with the \emph{individual-level} utility---i.e. the utility an individual obtains as the result of being \emph{subject} to algorithmic decision making---whereas immediate utility is concerned with the impact of decisions on the society. For example, while it might be beneficial from the perspective of a high-risk defendant to be released, the societal cost of releasing him/her into the community is regarded as high.  Furthermore and from a normative perspective, immediate utility is proposed as a replacement for prediction accuracy, whereas our measures are meant to capture desirability of algorithmic outcomes from the perspective of individuals subject to it.

Several papers in economics have studied the relationship between inequality aversion and risk aversion~\citep{schwartz1980welfare,dagum1990relationship}. At a high level, it is widely understood that the larger the relative risk aversion is, the more an individual choosing between different societies behind a ``veil of ignorance'' will be willing to trade-off expected benefit in order to achieve a more equal distribution.
The following papers attempt to further clarify the link between evaluating risk \emph{ex-ante} and evaluating inequality \emph{ex-post}: \citet{cowell2001risk} and \citet{carlsson2005people} empirically measure individuals' perceptions and preferences for risk and inequality through human-subject experiments. \citet{amiel2003inequality} establish a general relationship between the standard form of the social-welfare function and the ``reduced-form'' version that is expressed in terms of inequality and mean income. 



 


\section{Omitted Technical Material}\label{app:technical}


\paragraph{Proof of Proposition~\ref{prop:represent}}
Solving the following system of equations,
\[ \forall y,\hy \in \{0,1\}: c_y \hy + d_y = \bb_{y,\hy}\]
we obtain: $c_0 = \bb_{0,1} - \bb_{0,0}$, $c_1 = \bb_{1,1} - \bb_{1,0}$, $d_0 = \bb_{0,0}$, and $d_1 = \bb_{1,0}$.
\qed

\paragraph{Proof of Proposition~\ref{prop:heuristic}}
We have that:
\begin{eqnarray*}
A_{1-\alpha}(\vb) \geq A_{1-\alpha}(\vb') &\Rightarrow & 1-\frac{1}{\mu}\left(\frac{1}{n}\sum_{i=1}^{n}b_{i}^{\alpha}\right)^{1/\alpha} \geq 1-\frac{1}{\mu'}\left(\frac{1}{n}\sum_{i=1}^{n}b_{i}^{'\alpha}\right)^{1/\alpha}\\
&\Leftrightarrow & \frac{1}{\mu}\left(\frac{1}{n}\sum_{i=1}^{n}b_{i}^{\alpha}\right)^{1/\alpha} \leq \frac{1}{\mu'}\left(\frac{1}{n}\sum_{i=1}^{n}b_{i}^{'\alpha}\right)^{1/\alpha}\\
&\Leftrightarrow & \left(\frac{1}{n}\sum_{i=1}^{n}b_{i}^{\alpha}\right)^{1/\alpha} \leq \left(\frac{1}{n}\sum_{i=1}^{n}b_{i}^{'\alpha}\right)^{1/\alpha}\\
&\Leftrightarrow & \sum_{i=1}^{n} b_{i}^{\alpha} \leq \sum_{i=1}^{n }b_{i}^{'\alpha}\\
&\Leftrightarrow & \cW_\alpha(\vb) \leq \cW_\alpha(\vb')\\
\end{eqnarray*}
\qed

\paragraph{Generalized entropy vs. Atkinson index} Let $\cG_\alpha(\vb)$ specify the generalized entropy, where 
\[ \cG_\alpha(\vb) = \frac{1}{n\alpha(\alpha-1)} \sum_{i=1}^n \left[ \left(\frac{b_i}{\mu}\right)^\alpha - 1 \right]\]
\begin{proposition}\label{prop:atge}
Suppose $0<\alpha<1$. For any two benefit distributions $\vb,\vb'$, $\cA_{1-\alpha}(\vb) \geq \cA_{1-\alpha}(\vb')$ if and only if $\cG_{\alpha}(\vb) \geq \cG_{\alpha}(\vb')$.
\end{proposition}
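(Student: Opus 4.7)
}

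The plan is to show that both $A_{1-\alpha}(\vb)$ and $\cG_\alpha(\vb)$ are strictly monotone functions of a single scalar statistic of $\vb$, namely the ratio of the equally-distributed equivalent to the mean. From this the equivalence of the two orderings is immediate. Crucially, this reduction does \emph{not} require the mean-preserving assumption of Proposition~\ref{prop:heuristic}; the monotone relationship between $\cG_\alpha$ and $A_{1-\alpha}$ holds globally on the domain of strictly positive benefit vectors.

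Concretely, I would first define the equally-distributed equivalent $E(\vb) = \bigl(\tfrac{1}{n}\sum_{i=1}^n b_i^{\alpha}\bigr)^{1/\alpha}$ and let $r(\vb) = E(\vb)/\mu(\vb)$, where $\mu(\vb)=\tfrac{1}{n}\sum_i b_i$. By Jensen's inequality applied to the concave function $t\mapsto t^\alpha$ (using $0<\alpha<1$), one has $r(\vb)\in(0,1]$ for every $\vb\succ\mathbf{0}$. Reading the definition directly gives
\begin{equation*}
A_{1-\alpha}(\vb) \;=\; 1 - r(\vb).
\end{equation*}
The core algebraic step is to rewrite $\cG_\alpha$ in terms of $r$: pulling $\mu^{-\alpha}$ out of the sum,
\begin{equation*}
\cG_\alpha(\vb) \;=\; \frac{1}{n\alpha(\alpha-1)}\!\left[\frac{1}{\mu^{\alpha}}\sum_{i=1}^{n}b_i^{\alpha} - n\right]
\;=\; \frac{1}{\alpha(\alpha-1)}\!\left[\frac{1}{\mu^{\alpha}}\!\left(\tfrac{1}{n}\sum_{i=1}^{n}b_i^{\alpha}\right) - 1\right]
\;=\; \frac{r(\vb)^{\alpha}-1}{\alpha(\alpha-1)},
\end{equation*}
where the last equality uses $\bigl(\tfrac{1}{n}\sum_i b_i^\alpha\bigr) = E(\vb)^\alpha$ and $E(\vb)^\alpha/\mu^\alpha = r(\vb)^\alpha$.

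Combining the two displays yields the explicit link
\begin{equation*}
\cG_\alpha(\vb) \;=\; \phi\bigl(A_{1-\alpha}(\vb)\bigr), \qquad \phi(a) \;:=\; \frac{(1-a)^{\alpha}-1}{\alpha(\alpha-1)}.
\end{equation*}
It then remains to verify that $\phi$ is strictly increasing on $[0,1)$ for $0<\alpha<1$. Differentiating, $\phi'(a) = -\alpha(1-a)^{\alpha-1}/[\alpha(\alpha-1)] = (1-a)^{\alpha-1}/(1-\alpha)$, which is strictly positive on $[0,1)$ since $1-\alpha>0$ and $(1-a)^{\alpha-1}>0$. Therefore $A_{1-\alpha}(\vb)\geq A_{1-\alpha}(\vb')$ if and only if $\phi(A_{1-\alpha}(\vb))\geq \phi(A_{1-\alpha}(\vb'))$, i.e.\ $\cG_\alpha(\vb)\geq \cG_\alpha(\vb')$, completing the proof.

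I do not anticipate a genuine obstacle: once the identity $\cG_\alpha = \phi(A_{1-\alpha})$ is observed, the rest is a one-line calculus check. The only minor subtlety is verifying the signs (both $\alpha(\alpha-1)<0$ and $(1-a)^\alpha-1<0$ for $a\in(0,1)$), so that the quotient defining $\phi$ has the correct monotonicity; care with these signs is what makes the argument work specifically in the regime $0<\alpha<1$ advertised in the statement.
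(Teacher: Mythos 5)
Your proof is correct and rests on the same key identity as the paper's: a strictly monotone functional relation between $\cG_\alpha$ and $A_{1-\alpha}$ (the paper writes it as $A_{1-\alpha}(\vb)=1-\left(\alpha(\alpha-1)\cG_\alpha(\vb)+1\right)^{1/\alpha}$ and chains equivalences through logarithms, while you invert it as $\cG_\alpha=\phi(A_{1-\alpha})$ and check $\phi'>0$). Your derivation of the identity via the equally-distributed-equivalent ratio $r(\vb)$ is actually more explicit than the paper's, which simply asserts it, but the argument is essentially the same.
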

\begin{proof}
First note that for any distribution $\vb$, $\cA_{1-\alpha} (\vb) = 1 - \left(\alpha(\alpha-1)\cG_\alpha(\vb) + 1\right)^{1/\alpha}$. 
We have that
\begin{eqnarray*}
\cA_{1-\alpha}(\vb) \geq \cA_{1-\alpha}(\vb') &\Leftrightarrow & 1- \cA_{1-\alpha}(\vb) \leq 1-\cA_{1-\alpha}(\vb') \\
&\Leftrightarrow & \alpha \ln\left( 1- \cA_{1-\alpha}(\vb) \right) \leq \alpha \ln \left(1-\cA_{1-\alpha}(\vb')\right)\\
&\Leftrightarrow & \ln\left( \alpha(\alpha-1)\cG_\alpha (\vb) + 1\right) \leq \ln\left( \alpha(\alpha-1)\cG_\alpha (\vb') + 1\right) \\
&\Leftrightarrow & \alpha(\alpha-1)\cG_\alpha (\vb) + 1 \leq \alpha(\alpha-1)\cG_\alpha (\vb') + 1 \\
&\Leftrightarrow & \cG_\alpha (\vb)  \geq  \cG_\alpha (\vb')  \\
\end{eqnarray*}
\end{proof}

\paragraph{The role of intercept in guaranteeing high social welfare}
Consider the problem of minimizing mean squared error subject to fairness constraints. 
We observed empirically that for large values of $\tau$, guaranteeing high social welfare requires adding a large intercept to the unconstrained model's prediction. This does not, however, put a limit on the mean difference and Dwork's measure. Next, we formally prove this point for the special case in which labels are all a linear function of the feature vectors.
\begin{proposition}\label{prop:intercept}
Suppose there exists a weight vector $\vtheta^*$ such that for all $(\vx_i,y_i) \in D$, $y_i = \vtheta^*.\vx_i$. Then for any $0<\alpha<1$ and $\tau>1$, the optimal solution to (\ref{eq:opt_reg}) is $\vtheta' = \vtheta^* + \tau' \ve_k$, where $\tau' = \tau^{1/\alpha} - 1$.
\end{proposition}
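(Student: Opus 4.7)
Write $r_i = \vtheta\cdot\vx_i - y_i$. Using $y_i = \vtheta^*\cdot\vx_i$, we have $r_i = (\vtheta-\vtheta^*)\cdot\vx_i$, so the problem becomes: minimize $\sum_{i=1}^n r_i^2$ subject to $\sum_{i=1}^n (r_i+1)^\alpha \geq \tau n$, where $r$ ranges over the column space of the data matrix $X$ (thought of as rows $\vx_i$). The plan is to lower-bound the objective by a quantity depending only on the mean $\bar r = \tfrac{1}{n}\sum_i r_i$, then use the constraint to lower-bound $\bar r$, and finally exhibit the candidate $\vtheta' = \vtheta^* + \tau'\ve_k$ as the equality case.

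Step 1 (Jensen on the constraint). Since $0<\alpha<1$, the map $z\mapsto z^\alpha$ is concave on $\reals_{\geq 0}$. Jensen's inequality yields
\[
\frac{1}{n}\sum_{i=1}^n (r_i+1)^\alpha \;\leq\; \left(\frac{1}{n}\sum_{i=1}^n (r_i+1)\right)^{\!\alpha} = (\bar r + 1)^\alpha.
\]
Combining with $\sum_i (r_i+1)^\alpha \geq \tau n$ gives $(\bar r+1)^\alpha \geq \tau$, hence $\bar r \geq \tau^{1/\alpha}-1 = \tau'$. Since $\tau>1$, we have $\tau'>0$, so $\bar r \geq \tau' > 0$.

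Step 2 (Variance decomposition on the objective). Using $\sum_i r_i^2 = n\bar r^2 + \sum_i (r_i - \bar r)^2$ together with $\bar r \geq \tau' > 0$,
\[
\sum_{i=1}^n r_i^2 \;\geq\; n\bar r^2 \;\geq\; n\tau'^{\,2}.
\]
This lower bound holds for \emph{every} feasible $\vtheta$.

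Step 3 (Achievability). Take $\vtheta' = \vtheta^* + \tau'\ve_k$. By the homogeneous-coordinates convention the $k$th feature is $1$, so $\vtheta'\cdot\vx_i = \vtheta^*\cdot\vx_i + \tau' = y_i + \tau'$, i.e.\ $r_i \equiv \tau'$. Then $\bar r = \tau'$, the variance term vanishes, and the objective equals $n\tau'^{\,2}$. The constraint reads $\sum_i(\tau'+1)^\alpha = n\tau$, which is satisfied with equality. So $\vtheta'$ attains the lower bound in Step 2 and is therefore optimal.

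\textbf{Where I expect friction.} The only subtle point is ensuring all terms $(r_i+1)^\alpha$ are well-defined (i.e.\ $r_i+1\geq 0$), which is implicit in the problem's domain; one should note that the optimum has $r_i=\tau'>0$ so this is non-vacuous at the optimizer, and that relaxing to $r_i+1\geq 0$ does not enlarge the feasible region in a way that invalidates the Jensen bound (Jensen still applies on the closed half-line $[0,\infty)$ where $z^\alpha$ is concave). Beyond that, the argument is a clean Jensen + variance-decomposition sandwich, and no KKT machinery is needed because the two inequalities used are both tight at $r = \tau'\ones$.
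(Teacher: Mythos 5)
Your proof is correct, but it takes a genuinely different route from the paper's. The paper verifies optimality of $\vtheta'=\vtheta^*+\tau'\ve_k$ via the KKT conditions: it exhibits the dual multiplier $\lambda'=\frac{2}{\alpha\tau}\tau^{1/\alpha}(\tau^{1/\alpha}-1)$, checks stationarity, dual feasibility, complementary slackness, and primal feasibility, and invokes Slater's condition to conclude. Your argument is a direct primal sandwich: Jensen's inequality for the concave map $z\mapsto z^\alpha$ converts the welfare constraint into the mean bound $\bar r\geq\tau'$, and the decomposition $\sum_i r_i^2=n\bar r^2+\sum_i(r_i-\bar r)^2$ converts that into the objective bound $n\tau'^2$, which $\vtheta'$ attains with the constraint tight. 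Both are sound. Your route is more elementary---it needs no duality, no Slater condition, and no convexity verification for the constraint---and it delivers something the paper's proof does not state: any optimizer must have zero residual variance and mean exactly $\tau'$, so the optimal residual vector $r=\tau'\ones$ is unique (hence $\vtheta'$ is the unique optimum whenever the design matrix has full column rank). What the KKT route buys instead is the explicit dual multiplier, which quantifies the marginal accuracy cost of tightening $\tau$. Your remark on the domain issue ($r_i+1\geq 0$ being needed for the constraint to be real-valued, and Jensen applying on $[0,\infty)$) is a point the paper glosses over, and it is handled adequately.
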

\begin{proof}
Given that Slater's condition trivially holds, we verify the optimality of $\vtheta^* + \tau' \ve_k$, along with dual multiplier 
$$\lambda'= \frac{2}{\alpha \tau} \tau^{1/\alpha}(\tau^{1/\alpha} - 1),$$ 
using KKT conditions:
\begin{itemize}
\item \textbf{Stationarity} requires that:
$$ \sum_{i=1}^n 2\vx_i (\vtheta'.\vx_i - y_i ) = \lambda' \alpha \sum_{i=1}^n \vx_i (\vtheta'.\vx_i - y_i +1)^{\alpha-1}.$$
This is equivalent to
\begin{eqnarray*}\label{eqn:stationarity}
&& 2 \tau' \sum_{i=1}^n \vx_i = \lambda' \alpha \tau^{(\alpha-1)/\alpha} \sum_{i=1}^n \vx_i  \\
&\Leftrightarrow & 2(\tau^{1/\alpha} - 1) = \lambda' \alpha \tau^{1- 1/\alpha}\\
&\Leftrightarrow &  \frac{2}{\alpha \tau} \tau^{1/\alpha}(\tau^{1/\alpha} - 1)= \lambda'
\end{eqnarray*}
\item \textbf{Dual feasibility} requires that $\lambda' \geq 0$. Given that $0<\alpha<1$ and $\tau>1$, this holds strictly:
\begin{equation*}\label{eqn:dual_feasability}
\lambda'= \frac{2}{\alpha \tau} \tau^{1/\alpha}(\tau^{1/\alpha} - 1)>0\\
\end{equation*}
\item \textbf{Complementary slackness} require that
$$\lambda' (\sum_{i=1}^n (\vtheta'.\vx_i - y_i +1)^{\alpha} - \tau n) = 0.$$
Given that $\lambda' >0$, this is equivalent to $\sum_{i=1}^n (\vtheta'.\vx_i - y_i +1)^{\alpha} = \tau n$. Next, we have:
\begin{eqnarray*}\label{eqn:complementary_slackness}
\sum_{i=1}^n (\vtheta'.\vx_i - y_i +1)^{\alpha} = \tau n &\Leftrightarrow & \sum_{i=1}^n (\vtheta^*.\vx_i - y_i +1 + \tau')^{\alpha} = \tau n\\
&\Leftrightarrow & n (1+\tau')^\alpha = n \tau \\
&\Leftrightarrow & \tau' = \tau^{1/\alpha} - 1
\end{eqnarray*}
\item \textbf{Primal feasibility} automatically holds with equality given the complementary slackness derivation above.
\end{itemize}
\end{proof}

\section{Omitted Experimental Details}\label{app:sim}
\paragraph{Data sets}
For the regression task, we used the \emph{Crime and Communities data set}~\citep{communities_crime_dataset}. The data consists of 1994 observations each made up of 101 features, and it contains socio-economic, law enforcement, and crime data from the 1995 FBI UCR. Community type (e.g. urban vs. rural), average family income, and the per capita number of police officers in the community are a few examples of the explanatory variables included in the dataset. The target variable is the ``per capita violent crimes'' . 
We preprocessed the original dataset as follows: we removed the instances for which target value was unknown. Also, removed features whose values were missing for more than $80\%$ of instances. We standardized the data so that each feature has mean 0 and variance 1. We divided all target values by a constant so that labels range from $0$ to $1$. 
Furthermore, we flipped the labels to make sure higher $y$ values correspond to more desirable outcomes.

For the classification task, we used the \emph{COMPAS dataset} originally compiled by Propublica~\citep{propublica_data}. The data consists of 5278 observations each made up of the following features: intercept, severity of charge (felony or misdemeanour), number of priors, juvenile felony count, juvenile misdemeanor count, other juvenile offense count, race (African-American or white), age, gender, COMPAS scores (not included in our analysis). The target variable indicates the actual recidivism within 2 years.
The data was filtered following the original study:
If the COMPAS score was not issued within 30 days from the time of arrest, because of data quality reasons the instance was omitted.
The recidivism flag is assumed to be -1 if no COMPAS case could be found at all.
Ordinary traffic offences were removed.
We standardized the non binary features to have mean 0 and variance 1.
Also, we negated the labels to make sure higher $y$-values correspond to more desirable outcomes.

\paragraph{Optimization program for classification} Ideally we would like to find the optimum of the following constrained optimization problem:
\begin{equation*}
\begin{aligned}
& \underset{\vtheta \in \mathbb{R}^k}{\text{min}}
& &  \frac{1}{n} \sum_{i = 1}^{n} \log(1 + \exp(-y_i \vtheta.\vx_i)) \\
& \text{s.t.} 
& & \frac{1}{n} \sum_{i = 1}^{n} u(b(y_i, sign(\vtheta.\vx_i))) \geq \tau
\end{aligned}
\end{equation*}
However, the sign function makes the constraint non-convex, therefore we instead solve the following:
\begin{equation*}
\begin{aligned}
& \underset{\vtheta \in \mathbb{R}^k}{\text{min}}
& &  \frac{1}{n} \sum_{i = 1}^{n} \log(1 + \exp(-y_i \vtheta.\vx_i)) \\
& \text{s.t.} 
& & \frac{1}{n} \sum_{i = 1}^{n} u(b(y_i, \frac{\vtheta.\vx_i}{c})) \geq \tau, \\
&&& \left\lVert \vtheta\right\rVert^2 = 1
\end{aligned}
\end{equation*}
The constant $c$ ensures that the argument ($\frac{\vtheta.\vx_i}{c}$) of the benefit function is in $[-1,1]$ which keeps our benefit non negative. For this particular setting we chose $c = 5$.
We constrain $\vtheta$ to be unit-length since otherwise one could increase the benefit without changing the classification outcome by just increasing the length of $\vtheta$.

\begin{figure*}[h!]
    \centering
    \begin{subfigure}[b]{0.4\textwidth}
        \includegraphics[width=\textwidth]{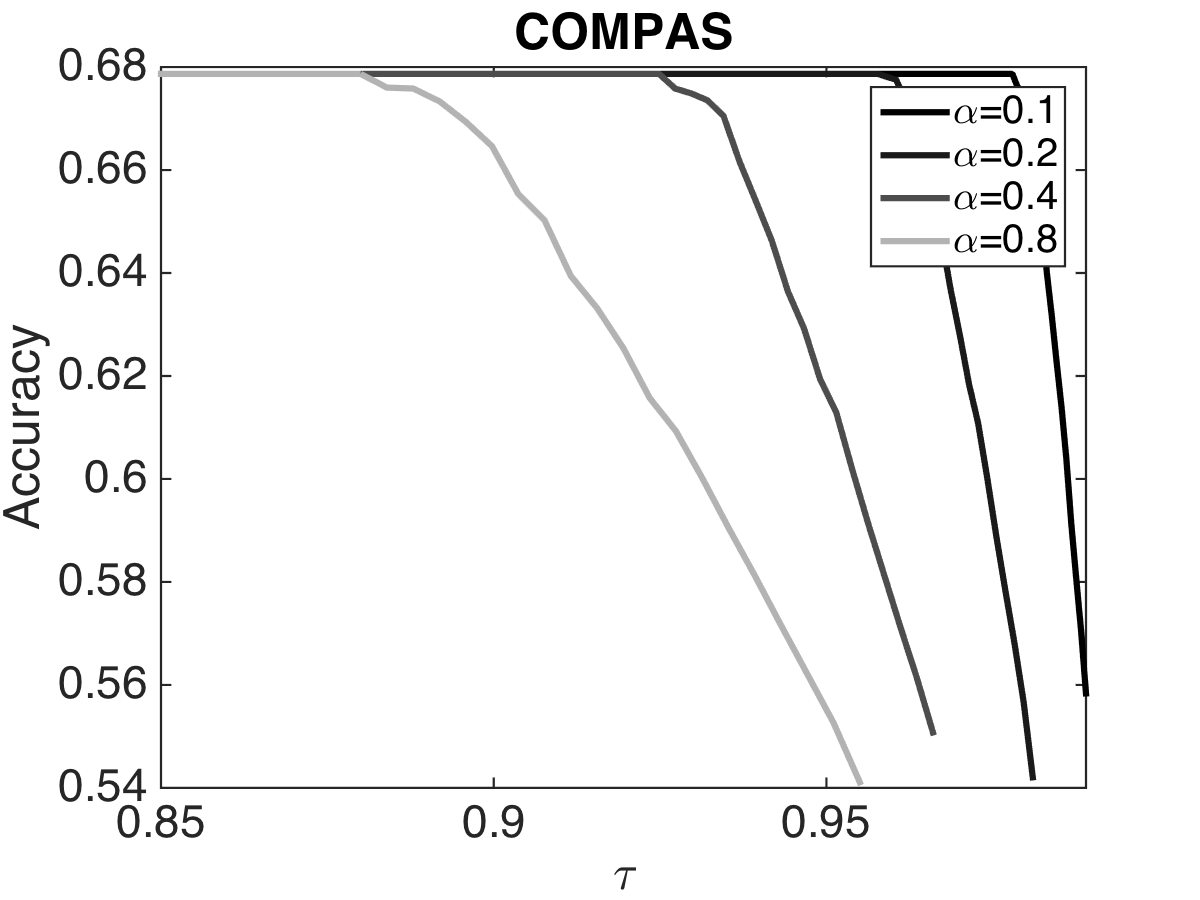}
    \end{subfigure}
    \begin{subfigure}[b]{0.4\textwidth}
        \includegraphics[width=\textwidth]{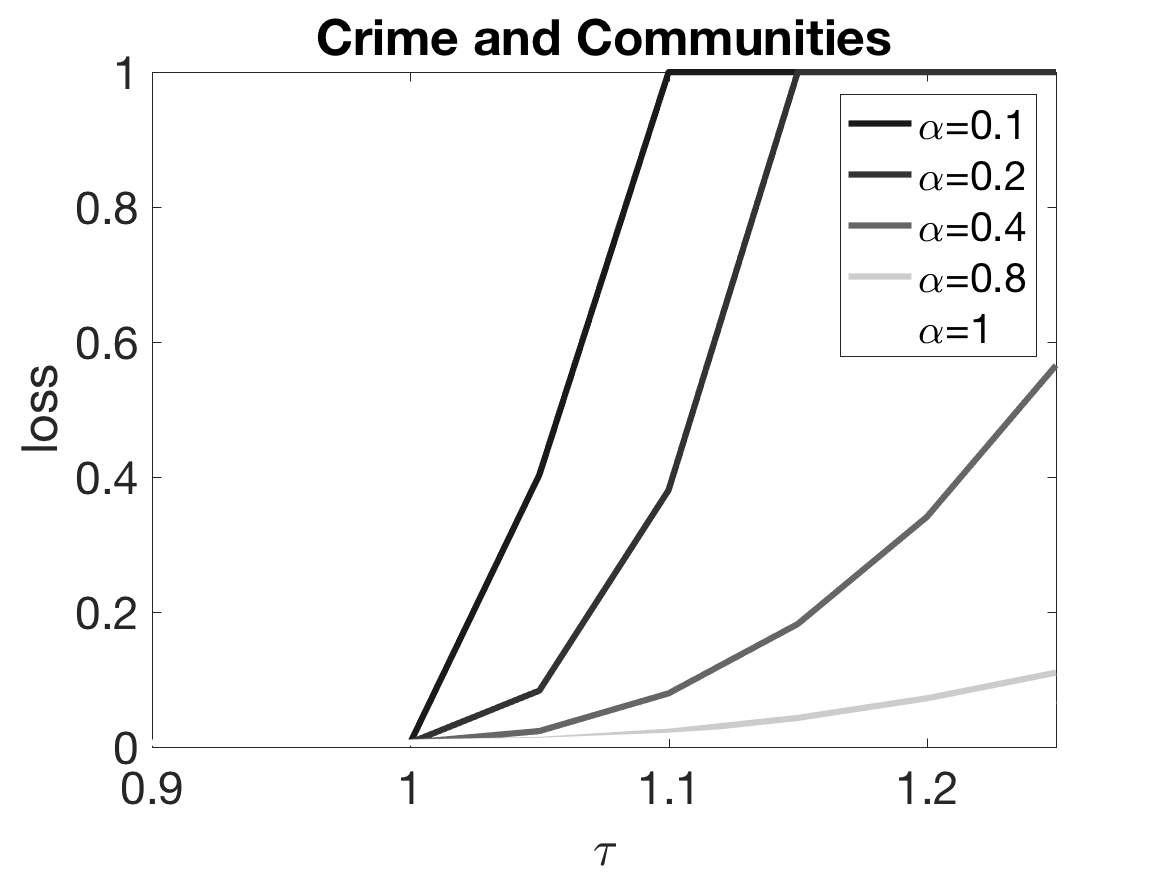}
    \end{subfigure}
    \caption{Accuracy and mean squared loss as the function of $\tau$ for different values of $\alpha$}\label{fig:loss}
\end{figure*}

\paragraph{Fairness Measures} Suppose we have two groups $G_1$, $G_2$ and our labels for classification are in $\{-1,1\}$. Also let $$G^+ := \sum_{i \in G} \textbf{1}[\hat{y}_ i > y_i]$$ and similarly $$G^- := \sum_{i \in G} \textbf{1}[\hat{y}_ i < y_i]$$

\begin{itemize}

\item \textbf{Average violation of \citeauthor{dwork2012fairness}'s pairwise constraints} is computed as follows:
$$\frac{2}{n(n-1)} \sum_{i=1}^n \sum_{j=i+1}^n \max\{0 , |\hy_i - \hy_j| - d(i, j)\}.$$
At a high level, the measure is equal to the average of the amount by which each pairwise constraint is violated. For classification, we took $d(i,j)$ to be the Euclidean distance between $\vx_i, \vx_j$ divided by the maximum Euclidean distance between any two points in the dataset. The normalization step is performed to make sure the range of $\vert \hy_i - \hy_j \vert$ and $d(i,j)$ are similar. For regression, we tool $d(i,j)= |y_i - y_j|$---assuming the existence of an ideal distance metric that perfectly specifies the similarity between any two individuals' ground truth labels.


\item \textbf{Demographic parity} is computed by taking the absolute difference between percentage of positive predictions across groups:
$$\left|\frac{1}{|G_1|} \sum_{i \in G_1} \textbf{1}[\hat{y}_i = 1] - \frac{1}{|G_2|}\sum_{i \in G_2} \textbf{1}[\hat{y}_i = 1]\right|.$$

\item \textbf{Difference in false positive rate} is computed by taking the absolute difference of the false positive rates across groups:
$$\left| f_{fpr}(G_1) - f_{fpr}(G_2)\right|$$
where:
$$f_{fpr}(G) := \sum_{i \in G} \frac{\textbf{1}[\hat{y}_i = 1 \land y_i = -1]}{\textbf{1}[y_i = -1]}.$$

\item \textbf{Difference in false negative rate} is computed by taking the absolute difference of the false negative rates across groups:
$$\left| f_{fnr}(G_1) - f_{fnr}(G_2)\right|$$
where:
$$f_{fnr}(G) := \sum_{i \in G} \frac{\textbf{1}[\hat{y}_i = -1 \land y_i = 1]}{\textbf{1}[y_i = 1]}.$$

\item \textbf{Mean difference} is computed by taking the absolute difference of the prediction means across groups:
$$\left|\frac{1}{|G_1|} \sum_{i \in G_1} \hat{y}_i - \frac{1}{|G_2|}\sum_{i \in G_2} \hat{y}_i\right|$$

\item \textbf{Positive residual difference}~\citep{calders2013controlling} is computed by taking the absolute difference of mean positive residuals across groups:
$$\left|\frac{1}{|G_1^+|} \sum_{i \in G_1} \max\{0,(\hat{y}_i- y_i)\}  - \frac{1}{|G_2^+|}\sum_{i \in G_2} \max\{0,(\hat{y}_i -y_i)\}\right|.$$

\item \textbf{Negative residual difference}~\citep{calders2013controlling} is computed by taking the absolute difference of mean negative residuals across groups:
$$\left|\frac{1}{|G_1^-|} \sum_{i \in G_1} \max\{0,(y_i - \hat{y}_i)\}  - \frac{1}{|G_2^-|}\sum_{i \in G_2} \max\{0,(y_i - \hat{y}_i)\}\right|.$$
\end{itemize}

\paragraph{Trade-offs with Accuracy}
Figure~\ref{fig:loss} illustrates the trade-offs between our proposed measure and prediction accuracy.
As expected, imposing more restrictive fairness constraints (larger $\tau$ and smaller $\alpha$), results in higher loss of accuracy.

\end{document}